
\documentclass{article}

\usepackage{microtype}
\usepackage{graphicx}
\usepackage{subfigure}
\usepackage{booktabs} 
\usepackage{def}
\usepackage{hyperref}



\usepackage[accepted]{icml2021}

\icmltitlerunning{PAC-Learning for Strategic Classification}

\begin{document}

\newcommand{\figleft}{{\em (Left)}}
\newcommand{\figcenter}{{\em (Center)}}
\newcommand{\figright}{{\em (Right)}}
\newcommand{\figtop}{{\em (Top)}}
\newcommand{\figbottom}{{\em (Bottom)}}
\newcommand{\captiona}{{\em (a)}}
\newcommand{\captionb}{{\em (b)}}
\newcommand{\captionc}{{\em (c)}}
\newcommand{\captiond}{{\em (d)}}

\newcommand{\newterm}[1]{{\bf #1}}

\def\figref#1{figure~\ref{#1}}
\def\Figref#1{Figure~\ref{#1}}
\def\twofigref#1#2{figures \ref{#1} and \ref{#2}}
\def\quadfigref#1#2#3#4{figures \ref{#1}, \ref{#2}, \ref{#3} and \ref{#4}}
\def\secref#1{section~\ref{#1}}
\def\Secref#1{Section~\ref{#1}}
\def\twosecrefs#1#2{sections \ref{#1} and \ref{#2}}
\def\secrefs#1#2#3{sections \ref{#1}, \ref{#2} and \ref{#3}}
\def\plaineqref#1{\ref{#1}}
\def\chapref#1{chapter~\ref{#1}}
\def\Chapref#1{Chapter~\ref{#1}}
\def\rangechapref#1#2{chapters\ref{#1}--\ref{#2}}
\def\algref#1{algorithm~\ref{#1}}
\def\Algref#1{Algorithm~\ref{#1}}
\def\twoalgref#1#2{algorithms \ref{#1} and \ref{#2}}
\def\Twoalgref#1#2{Algorithms \ref{#1} and \ref{#2}}
\def\partref#1{part~\ref{#1}}
\def\Partref#1{Part~\ref{#1}}
\def\twopartref#1#2{parts \ref{#1} and \ref{#2}}

\def\ceil#1{\lceil #1 \rceil}
\def\floor#1{\lfloor #1 \rfloor}
\def\1{\bm{1}}
\newcommand{\train}{\mathcal{D}}
\newcommand{\valid}{\mathcal{D_{\mathrm{valid}}}}
\newcommand{\test}{\mathcal{D_{\mathrm{test}}}}

\def\eps{{\epsilon}}

\def\reta{{\textnormal{$\eta$}}}
\def\ra{{\textnormal{a}}}
\def\rb{{\textnormal{b}}}
\def\rc{{\textnormal{c}}}
\def\rd{{\textnormal{d}}}
\def\re{{\textnormal{e}}}
\def\rf{{\textnormal{f}}}
\def\rg{{\textnormal{g}}}
\def\rh{{\textnormal{h}}}
\def\ri{{\textnormal{i}}}
\def\rj{{\textnormal{j}}}
\def\rk{{\textnormal{k}}}
\def\rl{{\textnormal{l}}}
\def\rn{{\textnormal{n}}}
\def\ro{{\textnormal{o}}}
\def\rp{{\textnormal{p}}}
\def\rq{{\textnormal{q}}}
\def\rr{{\textnormal{r}}}
\def\rs{{\textnormal{s}}}
\def\rt{{\textnormal{t}}}
\def\ru{{\textnormal{u}}}
\def\rv{{\textnormal{v}}}
\def\rw{{\textnormal{w}}}
\def\rx{{\textnormal{x}}}
\def\ry{{\textnormal{y}}}
\def\rz{{\textnormal{z}}}

\def\rvepsilon{{\mathbf{\epsilon}}}
\def\rvtheta{{\mathbf{\theta}}}
\def\rva{{\mathbf{a}}}
\def\rvb{{\mathbf{b}}}
\def\rvc{{\mathbf{c}}}
\def\rvd{{\mathbf{d}}}
\def\rve{{\mathbf{e}}}
\def\rvf{{\mathbf{f}}}
\def\rvg{{\mathbf{g}}}
\def\rvh{{\mathbf{h}}}
\def\rvu{{\mathbf{i}}}
\def\rvj{{\mathbf{j}}}
\def\rvk{{\mathbf{k}}}
\def\rvl{{\mathbf{l}}}
\def\rvm{{\mathbf{m}}}
\def\rvn{{\mathbf{n}}}
\def\rvo{{\mathbf{o}}}
\def\rvp{{\mathbf{p}}}
\def\rvq{{\mathbf{q}}}
\def\rvr{{\mathbf{r}}}
\def\rvs{{\mathbf{s}}}
\def\rvt{{\mathbf{t}}}
\def\rvu{{\mathbf{u}}}
\def\rvv{{\mathbf{v}}}
\def\rvw{{\mathbf{w}}}
\def\rvx{{\mathbf{x}}}
\def\rvy{{\mathbf{y}}}
\def\rvz{{\mathbf{z}}}

\def\erva{{\textnormal{a}}}
\def\ervb{{\textnormal{b}}}
\def\ervc{{\textnormal{c}}}
\def\ervd{{\textnormal{d}}}
\def\erve{{\textnormal{e}}}
\def\ervf{{\textnormal{f}}}
\def\ervg{{\textnormal{g}}}
\def\ervh{{\textnormal{h}}}
\def\ervi{{\textnormal{i}}}
\def\ervj{{\textnormal{j}}}
\def\ervk{{\textnormal{k}}}
\def\ervl{{\textnormal{l}}}
\def\ervm{{\textnormal{m}}}
\def\ervn{{\textnormal{n}}}
\def\ervo{{\textnormal{o}}}
\def\ervp{{\textnormal{p}}}
\def\ervq{{\textnormal{q}}}
\def\ervr{{\textnormal{r}}}
\def\ervs{{\textnormal{s}}}
\def\ervt{{\textnormal{t}}}
\def\ervu{{\textnormal{u}}}
\def\ervv{{\textnormal{v}}}
\def\ervw{{\textnormal{w}}}
\def\ervx{{\textnormal{x}}}
\def\ervy{{\textnormal{y}}}
\def\ervz{{\textnormal{z}}}

\def\rmA{{\mathbf{A}}}
\def\rmB{{\mathbf{B}}}
\def\rmC{{\mathbf{C}}}
\def\rmD{{\mathbf{D}}}
\def\rmE{{\mathbf{E}}}
\def\rmF{{\mathbf{F}}}
\def\rmG{{\mathbf{G}}}
\def\rmH{{\mathbf{H}}}
\def\rmI{{\mathbf{I}}}
\def\rmJ{{\mathbf{J}}}
\def\rmK{{\mathbf{K}}}
\def\rmL{{\mathbf{L}}}
\def\rmM{{\mathbf{M}}}
\def\rmN{{\mathbf{N}}}
\def\rmO{{\mathbf{O}}}
\def\rmP{{\mathbf{P}}}
\def\rmQ{{\mathbf{Q}}}
\def\rmR{{\mathbf{R}}}
\def\rmS{{\mathbf{S}}}
\def\rmT{{\mathbf{T}}}
\def\rmU{{\mathbf{U}}}
\def\rmV{{\mathbf{V}}}
\def\rmW{{\mathbf{W}}}
\def\rmX{{\mathbf{X}}}
\def\rmY{{\mathbf{Y}}}
\def\rmZ{{\mathbf{Z}}}

\def\ermA{{\textnormal{A}}}
\def\ermB{{\textnormal{B}}}
\def\ermC{{\textnormal{C}}}
\def\ermD{{\textnormal{D}}}
\def\ermE{{\textnormal{E}}}
\def\ermF{{\textnormal{F}}}
\def\ermG{{\textnormal{G}}}
\def\ermH{{\textnormal{H}}}
\def\ermI{{\textnormal{I}}}
\def\ermJ{{\textnormal{J}}}
\def\ermK{{\textnormal{K}}}
\def\ermL{{\textnormal{L}}}
\def\ermM{{\textnormal{M}}}
\def\ermN{{\textnormal{N}}}
\def\ermO{{\textnormal{O}}}
\def\ermP{{\textnormal{P}}}
\def\ermQ{{\textnormal{Q}}}
\def\ermR{{\textnormal{R}}}
\def\ermS{{\textnormal{S}}}
\def\ermT{{\textnormal{T}}}
\def\ermU{{\textnormal{U}}}
\def\ermV{{\textnormal{V}}}
\def\ermW{{\textnormal{W}}}
\def\ermX{{\textnormal{X}}}
\def\ermY{{\textnormal{Y}}}
\def\ermZ{{\textnormal{Z}}}

\def\vzero{{\bm{0}}}
\def\vone{{\bm{1}}}
\def\vmu{{\bm{\mu}}}
\def\vtheta{{\bm{\theta}}}
\def\va{{\bm{a}}}
\def\vb{{\bm{b}}}
\def\vc{{\bm{c}}}
\def\vd{{\bm{d}}}
\def\ve{{\bm{e}}}
\def\vf{{\bm{f}}}
\def\vg{{\bm{g}}}
\def\vh{{\bm{h}}}
\def\vi{{\bm{i}}}
\def\vj{{\bm{j}}}
\def\vk{{\bm{k}}}
\def\vl{{\bm{l}}}
\def\vm{{\bm{m}}}
\def\vn{{\bm{n}}}
\def\vo{{\bm{o}}}
\def\vp{{\bm{p}}}
\def\vq{{\bm{q}}}
\def\vr{{\bm{r}}}
\def\vs{{\bm{s}}}
\def\vt{{\bm{t}}}
\def\vu{{\bm{u}}}
\def\vv{{\bm{v}}}
\def\vw{{\bm{w}}}
\def\vx{{\bm{x}}}
\def\vy{{\bm{y}}}
\def\vz{{\bm{z}}}

\def\evalpha{{\alpha}}
\def\evbeta{{\beta}}
\def\evepsilon{{\epsilon}}
\def\evlambda{{\lambda}}
\def\evomega{{\omega}}
\def\evmu{{\mu}}
\def\evpsi{{\psi}}
\def\evsigma{{\sigma}}
\def\evtheta{{\theta}}
\def\eva{{a}}
\def\evb{{b}}
\def\evc{{c}}
\def\evd{{d}}
\def\eve{{e}}
\def\evf{{f}}
\def\evg{{g}}
\def\evh{{h}}
\def\evi{{i}}
\def\evj{{j}}
\def\evk{{k}}
\def\evl{{l}}
\def\evm{{m}}
\def\evn{{n}}
\def\evo{{o}}
\def\evp{{p}}
\def\evq{{q}}
\def\evr{{r}}
\def\evs{{s}}
\def\evt{{t}}
\def\evu{{u}}
\def\evv{{v}}
\def\evw{{w}}
\def\evx{{x}}
\def\evy{{y}}
\def\evz{{z}}

\def\mA{{\bm{A}}}
\def\mB{{\bm{B}}}
\def\mC{{\bm{C}}}
\def\mD{{\bm{D}}}
\def\mE{{\bm{E}}}
\def\mF{{\bm{F}}}
\def\mG{{\bm{G}}}
\def\mH{{\bm{H}}}
\def\mI{{\bm{I}}}
\def\mJ{{\bm{J}}}
\def\mK{{\bm{K}}}
\def\mL{{\bm{L}}}
\def\mM{{\bm{M}}}
\def\mN{{\bm{N}}}
\def\mO{{\bm{O}}}
\def\mP{{\bm{P}}}
\def\mQ{{\bm{Q}}}
\def\mR{{\bm{R}}}
\def\mS{{\bm{S}}}
\def\mT{{\bm{T}}}
\def\mU{{\bm{U}}}
\def\mV{{\bm{V}}}
\def\mW{{\bm{W}}}
\def\mX{{\bm{X}}}
\def\mY{{\bm{Y}}}
\def\mZ{{\bm{Z}}}
\def\mBeta{{\bm{\beta}}}
\def\mPhi{{\bm{\Phi}}}
\def\mLambda{{\bm{\Lambda}}}
\def\mSigma{{\bm{\Sigma}}}

\newcommand{\tens}[1]{\bm{\mathsfit{#1}}}
\def\tA{{\tens{A}}}
\def\tB{{\tens{B}}}
\def\tC{{\tens{C}}}
\def\tD{{\tens{D}}}
\def\tE{{\tens{E}}}
\def\tF{{\tens{F}}}
\def\tG{{\tens{G}}}
\def\tH{{\tens{H}}}
\def\tI{{\tens{I}}}
\def\tJ{{\tens{J}}}
\def\tK{{\tens{K}}}
\def\tL{{\tens{L}}}
\def\tM{{\tens{M}}}
\def\tN{{\tens{N}}}
\def\tO{{\tens{O}}}
\def\tP{{\tens{P}}}
\def\tQ{{\tens{Q}}}
\def\tR{{\tens{R}}}
\def\tS{{\tens{S}}}
\def\tT{{\tens{T}}}
\def\tU{{\tens{U}}}
\def\tV{{\tens{V}}}
\def\tW{{\tens{W}}}
\def\tX{{\tens{X}}}
\def\tY{{\tens{Y}}}
\def\tZ{{\tens{Z}}}

\def\gA{{\mathcal{A}}}
\def\gB{{\mathcal{B}}}
\def\gC{{\mathcal{C}}}
\def\gD{{\mathcal{D}}}
\def\gE{{\mathcal{E}}}
\def\gF{{\mathcal{F}}}
\def\gG{{\mathcal{G}}}
\def\gH{{\mathcal{H}}}
\def\gI{{\mathcal{I}}}
\def\gJ{{\mathcal{J}}}
\def\gK{{\mathcal{K}}}
\def\gL{{\mathcal{L}}}
\def\gM{{\mathcal{M}}}
\def\gN{{\mathcal{N}}}
\def\gO{{\mathcal{O}}}
\def\gP{{\mathcal{P}}}
\def\gQ{{\mathcal{Q}}}
\def\gR{{\mathcal{R}}}
\def\gS{{\mathcal{S}}}
\def\gT{{\mathcal{T}}}
\def\gU{{\mathcal{U}}}
\def\gV{{\mathcal{V}}}
\def\gW{{\mathcal{W}}}
\def\gX{{\mathcal{X}}}
\def\gY{{\mathcal{Y}}}
\def\gZ{{\mathcal{Z}}}

\def\sA{{\mathbb{A}}}
\def\sB{{\mathbb{B}}}
\def\sC{{\mathbb{C}}}
\def\sD{{\mathbb{D}}}
\def\sF{{\mathbb{F}}}
\def\sG{{\mathbb{G}}}
\def\sH{{\mathbb{H}}}
\def\sI{{\mathbb{I}}}
\def\sJ{{\mathbb{J}}}
\def\sK{{\mathbb{K}}}
\def\sL{{\mathbb{L}}}
\def\sM{{\mathbb{M}}}
\def\sN{{\mathbb{N}}}
\def\sO{{\mathbb{O}}}
\def\sP{{\mathbb{P}}}
\def\sQ{{\mathbb{Q}}}
\def\sR{{\mathbb{R}}}
\def\sS{{\mathbb{S}}}
\def\sT{{\mathbb{T}}}
\def\sU{{\mathbb{U}}}
\def\sV{{\mathbb{V}}}
\def\sW{{\mathbb{W}}}
\def\sX{{\mathbb{X}}}
\def\sY{{\mathbb{Y}}}
\def\sZ{{\mathbb{Z}}}

\def\emLambda{{\Lambda}}
\def\emA{{A}}
\def\emB{{B}}
\def\emC{{C}}
\def\emD{{D}}
\def\emE{{E}}
\def\emF{{F}}
\def\emG{{G}}
\def\emH{{H}}
\def\emI{{I}}
\def\emJ{{J}}
\def\emK{{K}}
\def\emL{{L}}
\def\emM{{M}}
\def\emN{{N}}
\def\emO{{O}}
\def\emP{{P}}
\def\emQ{{Q}}
\def\emR{{R}}
\def\emS{{S}}
\def\emT{{T}}
\def\emU{{U}}
\def\emV{{V}}
\def\emW{{W}}
\def\emX{{X}}
\def\emY{{Y}}
\def\emZ{{Z}}
\def\emSigma{{\Sigma}}

\newcommand{\etens}[1]{\mathsfit{#1}}
\def\etLambda{{\etens{\Lambda}}}
\def\etA{{\etens{A}}}
\def\etB{{\etens{B}}}
\def\etC{{\etens{C}}}
\def\etD{{\etens{D}}}
\def\etE{{\etens{E}}}
\def\etF{{\etens{F}}}
\def\etG{{\etens{G}}}
\def\etH{{\etens{H}}}
\def\etI{{\etens{I}}}
\def\etJ{{\etens{J}}}
\def\etK{{\etens{K}}}
\def\etL{{\etens{L}}}
\def\etM{{\etens{M}}}
\def\etN{{\etens{N}}}
\def\etO{{\etens{O}}}
\def\etP{{\etens{P}}}
\def\etQ{{\etens{Q}}}
\def\etR{{\etens{R}}}
\def\etS{{\etens{S}}}
\def\etT{{\etens{T}}}
\def\etU{{\etens{U}}}
\def\etV{{\etens{V}}}
\def\etW{{\etens{W}}}
\def\etX{{\etens{X}}}
\def\etY{{\etens{Y}}}
\def\etZ{{\etens{Z}}}

\newcommand{\pdata}{p_{\rm{data}}}
\newcommand{\ptrain}{\hat{p}_{\rm{data}}}
\newcommand{\Ptrain}{\hat{P}_{\rm{data}}}
\newcommand{\pmodel}{p_{\rm{model}}}
\newcommand{\Pmodel}{P_{\rm{model}}}
\newcommand{\ptildemodel}{\tilde{p}_{\rm{model}}}
\newcommand{\pencode}{p_{\rm{encoder}}}
\newcommand{\pdecode}{p_{\rm{decoder}}}
\newcommand{\precons}{p_{\rm{reconstruct}}}

\newcommand{\laplace}{\mathrm{Laplace}} 

\newcommand{\E}{\mathbb{E}}
\newcommand{\Ls}{\mathcal{L}}
\newcommand{\R}{\mathbb{R}}
\newcommand{\emp}{\tilde{p}}
\newcommand{\lr}{\alpha}
\newcommand{\reg}{\lambda}
\newcommand{\rect}{\mathrm{rectifier}}
\newcommand{\softmax}{\mathrm{softmax}}
\newcommand{\sigmoid}{\sigma}
\newcommand{\softplus}{\zeta}
\newcommand{\KL}{D_{\mathrm{KL}}}
\newcommand{\standarderror}{\mathrm{SE}}
\newcommand{\normlzero}{L^0}
\newcommand{\normlone}{L^1}
\newcommand{\normltwo}{L^2}
\newcommand{\normlp}{L^p}
\newcommand{\normmax}{L^\infty}

\newcommand{\parents}{Pa} 

\let\ab\allowbreak

 \newcommand{\todo}[1]{{\color{red}\noindent[ToDos: #1]}}

\newcommand{\linit}{\ell_{init}}
\newcommand{\fan}[1]{\textcolor{blue}{#1}}
\newcommand{\anil}[1]{\textcolor{red}{#1}}

\newenvironment{proofof}[1]{\begin{proof}[Proof of #1]}{\end{proof}}
\newenvironment{proofsketch}{\begin{proof}[Proof Sketch]}{\end{proof}}
\newenvironment{proofsketchof}[1]{\begin{proof}[Proof Sketch of #1]}{\end{proof}}

\def\pr{\qopname\relax n{Pr}}
\def\ex{\qopname\relax n{E}}
\def\min{\qopname\relax n{min}}
\def\max2{\qopname\relax n{max2}}
\def\max{\qopname\relax n{max}}
\def\argmin{\qopname\relax n{argmin}}
\def\argmax{\qopname\relax n{argmax}}
\def\avg{\qopname\relax n{avg}}

\def\Pr{\qopname\relax n{\mathbf{Pr}}}
\def\Ex{\qopname\relax n{\mathbf{E}}}

\newcommand{\RR}{\mathbb{R}}
\newcommand{\NN}{\mathbb{N}}
\newcommand{\ZZ}{\mathbb{Z}}
\newcommand{\QQ}{\mathbb{Q}}
\newcommand{\II}{\mathbb{I}}

\def\A{\mathcal{A}}
\def\B{\mathcal{B}}
\def\C{\mathcal{C}}
\def\D{\mathcal{D}}
\def\E{\mathcal{E}}
\def\F{\mathcal{F}}
\def\G{\mathcal{G}}
\def\H{\mathcal{H}}
\def\I{\mathcal{I}}
\def\J{\mathcal{J}}
\def\L{\mathcal{L}}
\def\M{\mathcal{M}}
\def\P{\mathcal{P}}
\def\R{\mathcal{R}}
\def\S{\mathcal{S}}
\def\O{\mathcal{O}}
\def\T{\mathcal{T}}
\def\V{\mathcal{V}}
\def\X{\mathcal{X}}
\def\Y{\mathcal{Y}}

\def\eps{\epsilon}

\def \cD {\mathcal{D}}
\def \cG {\mathcal{G}}
\def \cH {\mathcal{H}}
\def \cR {R}
\def \cX {\mathcal{X}}
\def \cY {\mathcal{Y}}

\def\x{\bm{x}} 
\def\XX{\textbf{X}} 
\def\y{\bm{y}} 
\def\w{\bm{w}} 
\def\c{\bm{c}} 
\def\e{\bm{e}} 
\def\r{\bm{r}} 
\def\v{\bm{v}} 
\def\z{\bm{z}} 
\def\p{\bm{p}} 
\def\q{\bm{q}} 
\def\u{\bm{u}} 
\def\v{\bm{v}}

\newcommand{\mini}[1]{\mbox{minimize} & {#1} &\\}
\newcommand{\maxi}[1]{\mbox{maximize} & {#1 } & \\}
\newcommand{\maxis}[1]{\mbox{max} & {#1 } & \\}
\newcommand{\minis}[1]{\mbox{min} & {#1 } & \\}
\newcommand{\find}[1]{\mbox{find} & {#1 } & \\}
\newcommand{\stt}{\mbox{subject to} }
\newcommand{\sts}{\mbox{s.t.} }
\newcommand{\con}[1]{&#1 & \\}
\newcommand{\qcon}[2]{&#1, & \mbox{for } #2.  \\}
\newenvironment{lp}{\begin{equation}  \begin{array}{lll}}{\end{array}\end{equation} }
\newenvironment{lp*}{\begin{equation*}  \begin{array}{lll}}{\end{array}\end{equation*}}

\newcommand{\probdet}{\textsc{StraC}}
\newcommand{\prob}{\textsc{StraC}$\langle \cH, R, c  \rangle$} 
\newcommand{\probL}{\textsc{StraC}$\langle \cH_d, R, c  \rangle$} 
\newcommand{\probrand}{\textsc{RandSC}}

\newcommand{\blue}[1]{\textcolor{blue}{#1}}

\twocolumn[
\icmltitle{PAC-Learning for Strategic Classification}



\icmlsetsymbol{equal}{*}

\begin{icmlauthorlist}
\icmlauthor{Ravi Sundaram}{equal,K}
\icmlauthor{Anil Vullikanti}{equal,CS,B}
\icmlauthor{Haifeng Xu}{equal,CS}
\icmlauthor{Fan Yao}{equal,CS}
\end{icmlauthorlist}


\icmlaffiliation{K}{Khoury College of Computer Science, Northeastern University, Boston, MA 02115}
\icmlaffiliation{CS}{Department of Computer Science, University of Virginia, Charlottesville, VA 22904}
\icmlaffiliation{B}{Biocomplexity Institute and Initiative, University of Virginia, Charlottesville, VA 22904}

\icmlcorrespondingauthor{Haifeng Xu}{hx4ad@virginia.edu}
\icmlcorrespondingauthor{Fan Yao}{fy4bc@virginia.edu}

\icmlkeywords{Machine Learning, ICML}

\vskip 0.3in
]



\printAffiliationsAndNotice{\icmlEqualContribution} 

\begin{abstract}

The study of \emph{strategic} or \emph{adversarial} manipulation of testing data to fool a classifier has attracted much recent attention.  Most previous works have focused on two extreme situations where any testing data point either is completely adversarial or always equally prefers the positive label. In this paper, we   generalize both of these through a unified framework for strategic classification, and introduce the notion of \emph{strategic VC-dimension} (SVC) to capture the PAC-learnability in our general strategic setup. SVC  provably generalizes the recent concept of adversarial VC-dimension (AVC) introduced by \citet{cullina:nips18}.
We instantiate our framework for the fundamental \emph{strategic linear classification} problem. We fully characterize: (1) the \emph{statistical learnability} of linear classifiers by pinning down its SVC; (2) its \emph{computational tractability}  by pinning down the complexity of the empirical risk minimization problem.  Interestingly, the SVC  of linear classifiers is always upper bounded by its standard VC-dimension. This characterization also strictly generalizes the  AVC bound for linear classifiers in \cite{cullina:nips18}.
\end{abstract}

\section{Introduction}
\label{sec:intro}
In today's increasingly connected world, it is rare that an algorithm will act alone. When a machine  learning algorithm is used to make predictions or decisions about others who have their own preferences over the learning outcomes, it is well known (e.g., \emph{Goodhart's law}) that \emph{gaming behaviors} may arise---these have been observed in a variety of domains such as finance~\cite{gaming-loan}, online retailing \cite{hannak2014measuring},  
education \cite{hardt2016stratclass} as well as during the ongoing COVID-19 pandemic \cite{unemploy-benefits,avoid-test}. In the early months of the pandemic, simple decision rules were designed for COVID-19 testing \cite{covid-test-rule} by the CDC. However, people had different preferences for getting tested. Those with work-from-home jobs and leave benefits preferred to get tested in order to know their true health status whereas some of the people with lower income, and without leave benefits preferred not to get tested (with fears of losing their income)~\cite{avoid-test}. 
Policy makers sometimes prefer to make
classification rules confidential \cite{Citron201489WashLRev0001TS} to mitigate such gaming.
However, this is not fool-proof in general since the methods may   be reverse engineered in some cases, and transparency of ML methods is sometimes mandated by law, e.g.,~\cite{goodman2016regulations}. 
Such concerns have led to a lot of interest in designing  learning algorithms that are robust to strategic gaming behaviors of the data sources \cite{Perote2004StrategyproofEF,dekel2010incentive,hardt2016stratclass,chen:ec18,dong:ec18,cullina:nips18,awasthi2019robustness}; the present work subscribes to this literature. 

This paper focuses on the ubiquitous binary classification problem, and we look to design classification algorithms that are robust to gaming behaviors \emph{during the test phase}. We study a strict generalization of the canonical classification setup that naturally incorporates data points' \emph{preferences} over classification outcomes (which leads to strategic behaviors as we will describe later). In particular, each data point is denoted as a tuple $(\x, y, r)$ where $\x \in \X$ and $y \in \{ -1, +1 \}$ are the \emph{feature} and \emph{label}, respectively (as in classic classification problems), and additionally, $r \in \RR$ is a real number that describes how much this data point prefers label $+1$ over $-1$. Importantly, we allow $r$ to be negative, meaning that the data point may prefer label $-1$. For instance, in the decision rules for COVID-19 testing, individuals who prefer to get tested have $r>0$, while those who prefer not to be tested have $r<0$. The magnitude $|r|$ captures how strong their preferences are. For example, in the school choice matching market, students have heterogeneous preferences over universities \cite{pathak2017really,roth2008have} and may manipulate their application materials during the admission process. Let set $R \subseteq \RR$ denote the set of all possible values that the preference value $r$ may take. Obviously, the trivial singleton set $R = \{ 0 \}$ corresponds to the classic classification setup without any preferences. Another special case of $R = \{ 1 \}$ corresponds to the situation where all data points prefer label $+1$ equally. 
This is the strategic classification settings studied in several previous works \cite{hardt2016stratclass,hu:fat19,miller2019strategic}.  A third special case is   $R = \{-1, 1 \}$. This corresponds to classification under \emph{evasion} attacks \cite{biggio:ecml13,goodfellow2015explaining,li2014feature,cullina:nips18,awasthi2019robustness}, where any test data point $(\x, y)$ prefers the \emph{opposite} of its true label $y$, i.e., the ``adversarial'' assumption.  

Our model considers any \emph{general preference} set $R$. As we will show, this much richer set of preferences may sometimes make learning more difficult, both statistically and computationally, but not always. 
Like~\cite{hardt2016stratclass,dong:ec18,goodfellow2015explaining,cullina:nips18}, our model assumes that manipulation is only possible to the data \emph{features} and happens only during the \emph{test} phase.  Specifically, the true feature of the test data may be altered by the  strategic data point. The cost of  masking a true  feature $\x$ to appear as a different feature $\z$ is captured by a \emph{cost function} $c(\z; \x)$. Therefore, 
the test data point's decision needs to balance the \emph{cost} of altering feature and the \emph{reward} of inducing its preferred label captured by $r$.  As is standard in game-theoretic analysis, the test data point is assumed a rational decision maker and will choose to alter to the feature $\z$ that maximizes its quasi-linear utility $[r \cdot \II(h(\z) = 1) - c(\z; \x)]$.   This naturally gives rise to a    \emph{Stackelberg game} \cite{von2010market}. We aim to learn, from i.i.d. drawn  (unaltered) training data, the optimal classifier $h^*$ that minimizes the  0-1  classification loss,
assuming  any randomly drawn test data point (from the same distribution as testing data) will respond to $h^*$ strategically.  Notably,  the data point's strategic behaviors will \emph{not} change its true label. Such behavior is  referred to as  \emph{strategic gaming}, which crucially differs from \emph{strategic improvement} studied recently \cite{kleinberg2019classifiers,miller2019strategic}.

\subsection{Overview of Our Results}


\noindent{\bf The Strategic VC-Dimension.} We  introduce the novel notion of \emph{strategic VC-dimension} SVC($\H, R, c$) which captures the learnability of any hypothesis class $\H$  when test data points' strategic behaviors are induced by cost function $c$ and preference values from any set $R \subseteq \RR$. 

$\bullet$ We prove that any strategic classification problem is agnostic PAC learnable by the empirical risk minimization paradigm with  $O \big( \epsilon^{-2}[d + \log(\frac{1}{\delta}))] \big)$ samples, where $d=$SVC($\H, R, c$). Conceptually,  this result illustrates that SVC correctly characterizes the learnability  of the hypothesis class $\H$ in our strategic setup.  

$\bullet$ Our SVC notion   generalizes the adversarial VC-dimension (AVC) introduced in \cite{cullina:nips18} for adversarial learning with evasion attacks. Formally, we prove that AVC equals precisely SVC($\H, R, c$) for $R=\{-1,1 \}$ when  data points are allowed to move within region  $\{ \z; c(\z;\x) \leq 1 \}$ in the corresponding adversarial learning setup. 
However, for general preference set $R$, SVC can  be arbitrarily larger than both AVC and the standard VC dimension. Thus, complex strategic behaviors may indeed make the learning statistically  more difficult. Interestingly,  to our knowledge, this is the first time that adversarial learning and strategic learning are unified under the same PAC-learning framework. 

$\bullet$ We prove SVC($\H, R, c$)$\leq 2$ for any $\H$ and $R$ when $c$ is any \emph{separable} cost function (introduced by \cite{hardt2016stratclass}). Invoking our sample complexity results above, this also recovers a  main learnability result of \citeyearpar{hardt2016stratclass} and, moreover, generalizes their result to arbitrary agent preferences.

\noindent {\bf Strategic Linear Classification. }  As a case study, we  instantiate our  strategic classification framework in perhaps one of the most fundamental classification problems,  \emph{linear classification}. Here, features are in $\RR^d$ linear space. We assume the cost function $c(\z;\x)$ for any $\x$ is induced by \emph{arbitrary} seminorms of the difference $\z - \x$. We distinguish between two crucial situations: (1) \emph{instance-invariant} cost function which means the cost of altering the feature $\x$ to $\x + \Delta$ is the same for any $\x$; (2) \emph{instance-wise} cost function which allows the cost from $\x$ to $\x + \Delta$ to be different for different $\x$. 
Our results show that the more general instance-wise costs impose significantly more difficulties in terms of both statistical learnability and computational tractability. 

$\bullet$ {\bf Statistical Learnability.} We prove that the SVC of linear classifiers is  $\infty$ for \emph{instance-wise} cost functions even when features are in $\RR^2$; in contrast, the SVC is at most $d+1$ for any \emph{instance-independent} cost functions and any $R$ when features are in $\RR^d$. This later result also strictly generalizes the AVC bound for linear classifiers proved in \cite{cullina:nips18}, and illustrates an interesting conceptual message: though SVC can be significantly larger than AVC in general,  \emph{extending from $R = \{ -1, 1\}$ (the adversarial  setting) to an arbitrary strategic preference set $R$  does not affect the statistical learnability of strategic linear classification}.

$\bullet$ {\bf Computational Tractability.} We show that the empirical risk minimization problem for linear classifier can   be solved in polynomial time only when the strategic classification problem exhibits certain \emph{adversarial} nature. Specifically, an instance is said to have \emph{adversarial} preferences if all negative test points prefer label $+1$ (but possibly to different extents) and all positive test points prefer label $-1$.  A strictly more relaxed situation has  \emph{essentially adversarial} references --- i.e.,  any negative test point prefers label $+1$ more than any positive test point.
We show that for instance-invariant cost functions, any essentially adversarial instance can be solved in polynomial time whereas for instance-wise cost functions, only  adversarial instances can be solved in polynomial time. These positive results are essentially the best one can hope for. Indeed, we prove that the following situations, which goes slightly beyond the tractable cases above, are both NP-hard: (1) instance-invariant cost functions but general preferences; (2) instance-wise cost functions but essentially adversarial preferences.

\subsection{Related Work and Comparison}\label{subsec:related}
Most relevant to ours is perhaps the strategic classification model studied by \cite{hardt2016stratclass} and \cite{zhangincentive}, where \citet{hardt2016stratclass} formally formulated the strategic classification problem as a repeated Stackelberg game and \citet{zhangincentive} studied the PAC-learning problem and tightly characterized the sample complexity via ''incentive-aware ERM''. However, their model and results all assume homogeneous agent preferences, i.e., all agents \emph{equally} prefer label $+1$. Our model strictly generalizes the model of \cite{hardt2016stratclass,zhangincentive}  by allowing agents' \emph{heterogeneous} preferences over classification outcomes. Besides the modeling differences, the research questions we study are also quite different from, and not comparable to,  \cite{hardt2016stratclass}. Their positive results are derived under the assumption of \emph{separable cost} functions or its variants. While our characterization of SVC equaling at most $2$ under separable cost functions implies a PAC-learnability result of \cite{hardt2016stratclass}, this characterization serves more as our case study and our main contribution here is the study of the novel concept of SVC, which does not appear in previous works. Moreover, we study the efficient learnability of linear classifiers with cost functions induced by semi-norms. This broad and natural class of cost functions is not separable, and thus the results of \citet{hardt2016stratclass} does not apply to this case.

Our model also   generalizes the setup of  \emph{adversarial classification with evasion attacks}, which has been studied in numerous applications, particularly deep learning models \cite{biggio:ecml13, biggio:icml12,li2014feature,carlini:sp17, goodfellow2015explaining, Jagielski2018ManipulatingML, moosavi:cvpr17, mozaffari:jbhi15, rubinstein:sigmetrics09}; however, most of these works do not yield theoretical guarantees. Our work  extends and strictly generalizes the recent work of \cite{cullina:nips18} through our more general concept of SVC  and results on computational efficiency. In a different work,~\cite{awasthi2019robustness}  studied \emph{computationally} efficient learning of linear classifiers in adversarial classification with $l_{\infty}$-norm-induced $\delta$-ball for allowable adversarial moves. Our computational tractability results generalize  their results to $\delta$-ball  induced by \emph{arbitrary semi-norms}.\footnote{ \cite{awasthi2019robustness} also studied computational tractability of learning other classes of classifiers, e.g., degree-2 polynomial  threshold classifiers, which we do not consider.}  



Strategic classification has been studied in other different settings or domains or for different purposes, including spam filtering  \cite{bruckner2011stackelberg},    classification under incentive-compatibility constraints \cite{zhangincentive},  online learning~\cite{dong:ec18,chen2020learning}, and understanding the social implications~\cite{akyol2016price,milli:fat19,hu:fat19}.
A relevant but quite different line of recent works study \emph{strategic improvements}~\cite{kleinberg2019classifiers,miller2019strategic,ustun2019actionable,bechavod2020causal,pmlr-v119-shavit20a}. 
Finally, going beyond classification, strategic behaviors in machine learning has received significant recent attentions, including in regression problems \cite{Perote2004StrategyproofEF,dekel2010incentive,chen:ec18}, distinguishing distributions \cite{zhang2019distinguishing,zhang2019samples}, and learning for pricing \cite{amin2013learning,mohri2015revenue,vanunts2019optimal}. Due to  space limit,  we refer the curious reader to Appendix \ref{append:related} for detailed discussions.

\section{Model}
\label{sec:prelim}

{\bf Basic Setup. }We consider binary classification, where each data point is  characterized by a tuple $(\x, y, r)$. Like classic classification setups,  $\x \in \X $ is the feature vector and $y \in \{ +1, -1\}$ is its label. The only difference of our setup from classic classification problems is the additional   $r \in \cR \subseteq  \RR$, which is the data point's (positive or negative) preference/reward of being labeled as $+1$. The data point's reward for label $-1$ is, without loss of generality,  normalized to be $0$.  A classifier is a  mapping $h:\X  \to \{ +1, -1\}$. Our model is essentially the same as that of \cite{hardt2016stratclass,miller2019strategic}, except that the $r$ in our model can be any real value from set $R$ whereas the aforementioned works assume $r=1$ for all data points. 
Notably, we also allow $r$ to be \emph{negative}, which means some data points   prefer to be classified as label $-1$.  This generalization is natural and very useful because  it allows much richer  agent preferences. For instance, it casts the adversarial/robust  classification problem  as a special case of our model as well (see discussions later). Intuitively, the set $R$ captures the richness of agents' preferences. As we will prove, how rich it is will affect both the statistical learnability and computational tractability of the learning problem. 

 \begin{figure}[ht]
    \centering
    \includegraphics[scale=0.35]{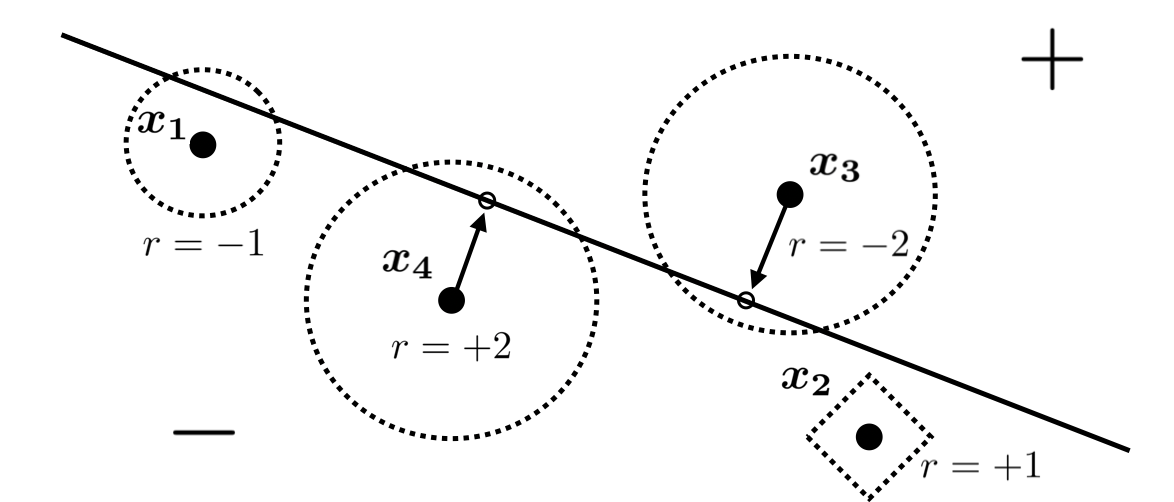}
    \caption{Example illustration of our setup. The line is a linear classifier. Points $\x_3, \x_4$ have incentive to cross the boundary  whereas $\x_1, \x_2$ do not. The dotted cycles contain all manipulated features which have moving cost exactly $1$ and they can be different for different points (i.e., instance-wise costs).}
    \label{fig:ex1}
\end{figure}
\noindent
\textbf{The Strategic Manipulation of \emph{Test} Data.} We consider strategic behaviors during the \emph{test} phase and assume that the training data is unaltered/uncontaminated. An illustration of the setup can be found in Figure \ref{fig:ex1}.  A generic \emph{test} data point is denoted as $(\x, y, r)$.  The  test data point  is strategic and may shift its feature to vector $\z$ with cost $c(\z; \x)$  where $c: \X \times \X  \to \RR_{\geq 0}$. In general,  function $c$ can be an arbitrary non-negative cost function. In our study of strategic linear classification, we assume the cost functions are induced by  \emph{seminorms}. We will consider the following two   types of cost functions, with increasing generality. 
\begin{enumerate}
 \vspace{-2mm} 
   \item  {\bf Instance-invariant cost functions:} A cost function $c$ is \emph{instance-invariant} if there is a common function $l$ such that $c(\z; \x) = l(\z - \x)$ for any point  $(\x, y, r)$.
 \vspace{-2mm}   
  \item {\bf  Instance-wise  cost functions:} A cost function $c$ is  \emph{instance-wise}  if  for each data point $(\x, y, r)$, there is a function $l_{\x}$ such that $c(\z; \x) = l_{\x}(\z - \x)$. Notably, $l_{\x}$ may be different for different data point $(\x, y, r)$.  \vspace{-2mm}  
\end{enumerate} 
Both cases have been considered in previous works. For instance, the separable cost function studied in \cite{hardt2016stratclass} is instance-wise, and the cost function induced by a seminorm  as assumed by the main   theorem  of \cite{cullina:nips18} is instance-invariant. We shall prove later that the choice among these two types of cost functions will largely affect the efficient learnability of the problem.

Given a classifier $h$, the strategic test data point $(\x,  y, r)$ may shift its feature vector to $\z$ and would like to pick the best such $\z$ by solving the following optimization problem:  
\begin{equation}\label{def:bestResponse}
\begin{aligned}
& \text{Data Point Best Response: } \\ 
& \Delta_c(\x,r;h) = \arg \max_{\mathbf{z} \in \X} \big[  \II(h(\z) = 1) \cdot r - c(\z;\x) \big].
\end{aligned}
 \vspace{-2mm}
\end{equation}
where $\II(S)$ is the indicator function and $[  \II(h(\z) = 1) \cdot r - c(\z;\x) ]$ is the quasi-linear \emph{utility function} of data point $(\x,y, r)$. We call $\Delta_c(\x, r;h)$  the  \emph{manipulated feature}. When there are multiple best responses, we assume the data point  may choose any best response and thus will adopt the standard  \emph{worst-case} analysis.   
Note that the test data's strategic behaviors do  \emph{not} change its true label.  Such strategic  \emph{gaming} behaviors differs from  strategic \emph{improvements}  (see \citeyearpar{miller2019strategic} for more discussions on their differences).

  \vspace{-1mm}
\subsection{The Strategic Classification (\probdet) Problem} 
A \probdet{} problem is described by a  hypothesis class $\H$,  the set of preferences $R$ and a manipulation cost function $c$. We thus denote it as \prob{}. Adopting the standard statistical learning framework,  the input to our learning task are $n$ \emph{uncontaminated} \emph{training} data points $(\x_1, y_1, r_1), \cdots, (\x_n, y_n, r_n)$ drawn independently and identically (i.i.d.) from distribution $\D$.  Given these training data, we look to learn a classifier $h\in \H$ which  minimizes the basic 0-1 loss,   defined as follows:
\begin{equation}\label{eq:def:straExpLoss}
\begin{aligned}
   & \text{Strategic 0-1 Loss of classifier }h: \\ & L_c(h;\D) = \Pr_{(\x,y, r) \sim \D} \big[ h(\Delta_c(\x,r;h)) \not = y) \big]. 
\end{aligned}
 \vspace{-2mm}
\end{equation}
Notably, the classifier $h$ in the above loss definition takes the manipulated feature $\Delta_c(\x,r;h))$ as input and, nevertheless, looks to correctly predict the true label $y$.  
For notational convenience, we sometimes omit $c$ when it is clear from the context and simply write $\Delta (\x,r;h)$ and $L(h;\D)$.

 \vspace{-1mm}
\subsection{Notable Special Cases }\label{sec:prelim:cases}
Our strategic classification model  generalizes several models studied in previous literature, which we now sketch.\\ 
{\bf Non-strategic classification. } When $\cR = \{ 0 \}$  and $c(\z;\x)>0$ for  any $\x \not =\z$, our model degenerates to the standard non-strategic setting.\\
{\bf  Strategic classification with homogeneous preference.}  When $\cR = \{ 1 \}$, our model degenerates to the strategic classification model  studied in prior work \cite{hardt2016stratclass,hu:fat19,milli:fat19}---here all data points have the same incentive of being classified as $+1$.\\ 
{\bf Adversarial Classification. } When $R = \{ 1, -1 \}$  (or $\{\delta, -\delta\}$,$\delta\neq 0$), our model becomes the adversarial classification problem \citeyearpar{cullina:nips18,awasthi2019robustness}, where each data point can adversarially move to induce the \emph{opposite} of its true label ---  within the ball of radius $1$ induced by cost function $c$. Our Proposition \ref{prop:svc=avc} provides formal evidence for this connection.\\
{\bf Generalized Adversarial Classification. } An interesting generalization of the above adversarial classification setting is that $r < 0$ for all data points with true label $+1$ and $r>0$  for all data points with true label $-1$. This captures the  situation where each point has different ``power'' (decided by $|r|$) to play against the classifier. To our knowledge, this generalized setting has not been considered before. 
    Our results yield new efficient statistical learnability and computational tractability for this setting.     

\section{VC-Dimension for Strategic Classification}
\label{sec:stratvc}

In this section, we  introduce the notion of \emph{strategic VC-dimension} (SVC) and show that it properly captures the behaviors of a hypothesis class in the strategic setup introduced above. We then show the connection of SVC with previous studies on both strategic and adversarial learning. 
Before formally introducing SVC, we first define the shattering coefficients in strategic setups. 

\begin{definition}[Strategic Shattering Coefficients] 
\label{definition:ssc}
The $n$'th shattering coefficient of any strategic classification problem \prob{} is defined as 
\begin{equation}
    \begin{aligned}\label{eq:shattering-coeff}
    & \sigma_n (\cH, \cR, c )= \max_{\bm{(\x,r)} \in \cX^n \times \cR^n}\\ \nonumber 
    &|\{(h(\Delta_c(\x_1, r_1; h )) , \cdots, h(\Delta_c(\x_n, r_n;h )):h\in \cH\}|,
    \end{aligned}
\end{equation}
where $\Delta_c(\x_i, r_i; h)$   defined in Eq. \eqref{def:bestResponse} is a best response of data point $(\x_i,y_i,r_i)$ to classifier $h$ under cost function $c$. 
\end{definition}
That is, $\sigma_n (\cH, \cR, c )$ captures the maximum number of classification behaviors/outcomes (among all choices of data points) that classifiers in $\cH$ can possibly induce by using manipulated features as input. Like classic definition of shattering coefficient,  the $\sigma_n (\cH, \cR, c )$ here does not involve the labels of the data points at all. In contrast, in the shattering coefficient definition for adversarial VC-dimension of \cite{cullina:nips18},  the ``$\max$''  is allowed to be over data labels as well. This is an important difference from us.  
Given the definition of the strategic shattering coefficients, the definition of strategic VC-dimension is standard.    


\begin{definition} \label{definition:svc}
The Strategic VC-dimension (SVC) for  strategic classification problem \prob{} is defined as
\begin{equation}
    \text{SVC}(\cH,\cR, c)=\sup \{n \in \mathbb{N}: \sigma_n(\cH, \cR,c)=2^n \}. 
\end{equation} 
\end{definition}
We show that the SVC  defined above correctly characterizes the learnability of any strategic classification problem \prob{}.   We  consider the standard Empirical risk minimization (ERM) paradigm for strategic classification, but  take into account training data's manipulation behaviors. Specifically, given any cost function $c$, any $n$ uncontaminated  training data points $(\mathbf{x}_1, y_1, r_1), \cdots, (\mathbf{x}_n, y_n, r_n)$ drawn independently and identically (i.i.d.) from the same distribution $\D$, 
the \emph{strategic empirical risk minimization} (SERM) problem computes a classifier $h \in \cH$ that minimizes the empirical strategic 0-1 loss in Eq. \eqref{eq:def:straExpLoss}. Formally, the SERM    for \prob{} is  defined as follows:
\begin{equation}\label{def:erm}
\begin{aligned}
&\text{SERM} : \quad   \text{argmin}_{h \in \cH} \, \,   L_{c}(h, \{ (\x_i,y_i,r_i) \}_{i=1}^n ) \\& = \sum_{i=1}^n \mathbb{I} \big[ h (\Delta_c(\x_i, r_i; h)) \not = y_i \big]
\end{aligned}
\vspace{-2mm}
\end{equation}
where $L_{c}(h, \{ (\x_i,y_i,r_i) \}_{i=1}^n ) $ is the \emph{empirical loss} (compared to the expected loss $L_c(h, \D)$ defined in Equation \eqref{eq:def:straExpLoss}). 
Unlike the standard (non-strategic) ERM problem and similar in spirit to the ''incentive-aware ERM'' in \cite{zhangincentive}, classifiers in the SERM problem take each data point's strategic response $\Delta_c\ (\x_i, r_i; h) $ as input, while not the original feature vector $\x_i$. 

Given the definition of strategic VC-dimension and the SERM framework, we state the sample complexity result for PAC-learning in our strategic setup:


\begin{definition}[PAC-Learnability]
In a strategic classification problem \prob{},  the hypothesis class $\cH \subseteq (\cX \xrightarrow{} \{+1,-1\})$ is \emph{Probably Approximately Correctly (PAC) learnable} by an algorithm $\A$ if there is a function $m_{\cH, R, c}:(0,1)^2 \xrightarrow{}\mathbb{N}$ such that $\forall (\delta,\epsilon) \in (0,1)^2$, for any  $n\geq m_{\cH, R, c}(\delta, \epsilon)$ and any distribution $\D$ for $(\x, y, r)$, with at least probability $1-\delta$, we have $L_{c}(h^*, \D) \leq \epsilon$ where $h^*$ is the output of the algorithm $\A$ with $n$ i.i.d. samples from $\D$ as input.  The problem is \emph{agnostic PAC learnable} if  $L_{c}(h^*, \D) - \inf_{h\in\cH} L_{c}(h,\D) \leq \epsilon $.  
\end{definition}

\begin{theorem}
\label{theorem:probdet-pac}
Any strategic classification instance \prob{} is agnostic PAC learnable with sample complexity $m_{\cH,R,c}(\delta,\epsilon)\leq C \epsilon^{-2} [d+\log(\frac{1}{\delta})]$ by the SERM in Eq. \eqref{def:erm}, where $d=SVC(\cH, R, c)$ is the strategic VC-dimension and $C$ is an absolute constant. 
\end{theorem}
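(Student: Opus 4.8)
The plan is to reduce the strategic PAC-learning problem to a standard (non-strategic) PAC-learning problem over a derived hypothesis class, and then invoke the classical uniform convergence / fundamental theorem of statistical learning. The key observation is that, although the loss $L_c(h;\D)$ involves the data point's best response $\Delta_c(\x,r;h)$ — which depends on $h$ — we can absorb this dependence by defining, for each $h\in\cH$, an induced ``strategic predictor'' $\tilde h:\cX\times\cR\to\{+1,-1\}$ by $\tilde h(\x,r) = h(\Delta_c(\x,r;h))$ (using the worst-case tie-breaking to make this well-defined for the purposes of the loss). Let $\tilde\cH = \{\tilde h : h\in\cH\}$ be the induced class over the domain $\cX\times\cR$. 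Then the strategic 0-1 loss of $h$ on $\D$ equals the ordinary 0-1 loss of $\tilde h$ under the pushforward of $\D$ onto $((\x,r),y)$, and SERM for $\langle\cH,R,c\rangle$ is exactly ordinary ERM over $\tilde\cH$.

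First I would verify that the $n$-th shattering coefficient of $\tilde\cH$ (in the classical sense, over the domain $\cX\times\cR$) is exactly $\sigma_n(\cH,R,c)$ as given in Definition \ref{definition:ssc}: the maximization over $(\x_i,r_i)\in\cX^n\times\cR^n$ of the number of realizable sign patterns $(\tilde h(\x_1,r_1),\dots,\tilde h(\x_n,r_n))$ is verbatim the definition of $\sigma_n(\cH,R,c)$. Consequently $\mathrm{VCdim}(\tilde\cH) = \mathrm{SVC}(\cH,R,c) = d$ by Definition \ref{definition:svc}. Second, I would invoke the standard agnostic PAC bound: any hypothesis class of VC-dimension $d$ over any domain is agnostic PAC learnable by ERM with sample complexity $O(\epsilon^{-2}(d+\log(1/\delta)))$ — this follows from the VC inequality / Vapnik–Chervonenkis uniform convergence theorem together with Sauer's lemma, since a class is a uniform Glivenko–Cantelli class iff its VC-dimension is finite, with the quantitative rate stated. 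Since ERM over $\tilde\cH$ on the pushforward distribution is the same algorithm as SERM over $\langle\cH,R,c\rangle$ on $\D$, and the losses coincide, the bound transfers directly, giving $m_{\cH,R,c}(\delta,\epsilon)\le C\epsilon^{-2}[d+\log(1/\delta)]$.

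The main obstacle — and the place where care is genuinely needed — is the measurability and tie-breaking subtlety in defining $\tilde h$. When the best response $\Delta_c(\x,r;h)$ is not unique, the loss is defined in the worst-case sense over best responses, so $\tilde h(\x,r)$ should really be taken as the ``adversarial'' choice: $\tilde h(\x,r) = -y$ if \emph{some} best response is misclassified, and $= y$ otherwise. But this makes $\tilde h$ depend on $y$, which breaks the clean reduction to a label-free hypothesis class. The standard fix, which I would carry out, is to observe that $\sigma_n(\cH,R,c)$ is already defined over \emph{all} choices of $(\x_i,r_i)$, so worst-case tie-breaking only ever picks a realizable sign pattern that is already counted; formally, one shows that the uniform convergence bound over the (possibly $y$-dependent, set-valued) loss class is still controlled by $\sigma_n(\cH,R,c)$ because the number of distinct loss-vectors $(\mathbb{I}[h(\Delta_c(\x_i,r_i;h))\ne y_i])_{i=1}^n$ realizable as $h$ ranges over $\cH$ is at most $\sigma_n(\cH,R,c)$ for \emph{any} fixed sample (including labels and any tie-breaking rule), since each such vector is determined by the sign pattern $(h(\Delta_c(\x_i,r_i;h)))_i$. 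With this growth-function bound in hand, the symmetrization argument of Vapnik–Chervonenkis applies unchanged, and the rest is the routine chaining-free union bound over the ghost sample. I would also note measurability is not an issue under mild assumptions (e.g. $\cX$ Polish, $c$ and $\cH$ suitably measurable), which I would state rather than belabor.
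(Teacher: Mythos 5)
Your proposal is correct and follows essentially the same route as the paper's own proof: augment the domain to $\cX\times R$, define the induced class $\tilde{\cH}=\{\kappa_c(h):h\in\cH\}$ whose classical VC-dimension equals $\mathrm{SVC}(\cH,R,c)$, and invoke the fundamental theorem of statistical learning, noting that SERM is exactly ERM over $\tilde{\cH}$. Your additional discussion of the worst-case tie-breaking and measurability is a point the paper leaves implicit, but it does not change the argument.
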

\begin{proof}[Proof Sketch]
The key idea of the proof is to convert our new strategic learning setup to a standard PAC learning setup, so that the Rademacher complexity argument can be applied. This is done by viewing the preference $r \in R$ as an additional dimension in the augmented feature vector space. Formally, we consider the new binary hypothesis class $\tilde{H}=\{\kappa_c(h): h\in H\}$, where classifier $\kappa_c$ satisfies $\kappa_c(h) : (\x,r) \mapsto h(\Delta_c(\x, r;h)), \forall (\x, r) \in \cX \times R$. It turns out that the SVC defined above captures the VC-dimension for this new hypothesis class $\tilde{H}$. Formally proof can be found in Appendix \ref{sec:append:thm1}.
\end{proof}

Due to space limit, we defer all formal proofs to the appendix. Proof sketches are provided for some of the results.  Next, we illustrate how SVC  connects to previous literature, particularly the two most relevant works by Cullina et al. \cite{cullina:nips18} and Hardt et al. \cite{hardt2016stratclass}. 

\subsection{SVC generalizes Adversarial VC-Dimension (AVC)} 
We show that SVC generalizes the adversarial VC dimension (AVC) introduced by \cite{cullina:nips18}. We give an intuitive description of AVC here, and refer the curious reader  to  Appendix \ref{prop:svc=avc} for its formal definition. At a high level,  AVC captures the behaviors of binary classifiers under \emph{adversarial} manipulations. Such adversarial manipulations are described by a binary nearness relation $\mathcal{B}\subseteq \cX\times\cX $ and  $(\z; \x)  \in \mathcal{B}$ if and only if data point with feature $\x$ can manipulate its feature to $\z$. 
Note that there is no direct notion of agents' \emph{utilities} or \emph{costs} in adversarial classification since each data point simply tries to ruin the classifier by moving within the allowed manipulation region (usually an $\delta$-ball around the data point). Nevertheless, our next result shows that AVC with binary nearness relation $\mathcal{B}$ always equals  to SVC as long as the set of strategic manipulations induced by the data points' incentives is the same as $\mathcal{B}$.  To formalize our statement, we need the following consistency definition. 
\begin{definition}\label{def:bc-consistency}
Given any binary relation $\mathcal{B}$ and any cost function $c$, we say $\mathcal{B},c$ are $r$-consistent if $\mathcal{B} = \{(\z;\x): c(\z;\x) \leq r \}$. In this case, we also say $\mathcal{B}$ [resp. $c$] is $r$-consistent with $c$ [resp. $\mathcal{B}$].   
\end{definition}
By definition  any cost function $c$ is $r$-consistent with the natural binary nearness relation it  induces $\mathcal{B}_c  = \{ (\z;\x): c(\z;\x) \leq r \}$.  Conversely, any binary relation $\mathcal{B}$ is $r$-consistent (for any $r>0$) with a natural cost function that is simply an indicator function of $\mathcal{B}$ defined as follows
\begin{equation}
    c_{\mathcal{B}}(\z; \x) = \begin{cases}
\infty, & \text{if } (\z;\x) \in \mathcal{B} \\
0,  & \text{if }  ( \z;\x) \not \in \mathcal{B} 
\end{cases}. 
\end{equation} 
Note that, $\mathcal{B}$ and $c$ may be $r$-consistent for infinitely many different $r$, as shown in the above example with $\mathcal{B}$ and $c_{\mathcal{B}}$.

\begin{proposition} \label{prop:svc=avc}
For any hypothesis class $\cH$ and any binary nearness relation $\mathcal{B}$, let AVC$(\cH, \B)$ denote the adversarial VC-dimension defined in \cite{cullina:nips18}. Suppose $\mathcal{B}$ and $c$ are $r$-consistent for some $r>0$, then we have AVC$(\cH, \mathcal{B})=$SVC$(\cH, \{+r,-r\}, c)$.
\end{proposition}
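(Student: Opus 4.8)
The plan is to show that the two notions coincide by proving that the family of "realizable labeling patterns" is the same for both, which immediately implies the shattering coefficients agree for all $n$, hence the dimensions agree. The key observation is that when $\mathcal{B}$ and $c$ are $r$-consistent with $r>0$, the best response of a data point with preference $+r$ (resp.\ $-r$) under cost $c$ exactly realizes the adversarial move within the nearness ball $\mathcal{B}$ when doing so flips the classifier's output to $+1$ (resp.\ $-1$), and otherwise the point stays put. So I would first recall the definition of AVC from \cite{cullina:nips18}: its shattering coefficient maximizes, over $n$ feature-label pairs, the number of "adversarial prediction patterns" $(v_1,\dots,v_n)$ such that there is some $h\in\mathcal{H}$ with $v_i = -1$ if the adversary can force a mistake-or-not — more precisely, $v_i$ records whether $h$ classifies \emph{every} point reachable from $\x_i$ within $\mathcal{B}$ consistently, and the adversary tries to induce $v_i \ne y_i$. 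I would state it carefully so that the correspondence below is transparent.

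\textbf{Step 1: Unpack the best response under $r$-consistency.} For a point $(\x, y, r)$ with $r \in \{+r_0, -r_0\}$ (using $r_0$ for the consistency constant to avoid clashing notation) and a classifier $h$, the utility of moving to $\z$ is $\II(h(\z)=1)\cdot r - c(\z;\x)$. If $r = +r_0$: the point gains $r_0$ only by reaching some $\z$ with $h(\z)=1$, and since moving costs at least $0$ and staying costs $0$, the optimal move is to $\z=\x$ if $h(\x)=1$ already, or to any $\z$ with $c(\z;\x)\le r_0$ and $h(\z)=1$ if one exists (net utility $\ge 0$), or to stay at $\x$ otherwise. By $r$-consistency, "$\exists \z$ with $c(\z;\x)\le r_0$ and $h(\z)=1$" is exactly "$\exists \z$ with $(\z;\x)\in\mathcal{B}$ and $h(\z)=1$." Hence $h(\Delta_c(\x, +r_0; h)) = +1$ iff $h$ labels some $\mathcal{B}$-neighbor of $\x$ as $+1$, and $=-1$ otherwise (worst-case tie-breaking does not change this since the \emph{output value} $h(\Delta)$ is what matters, not which $\z$ is chosen). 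Symmetrically, $h(\Delta_c(\x, -r_0; h)) = -1$ iff $h$ labels some $\mathcal{B}$-neighbor of $\x$ as $-1$, else $=+1$. This is precisely the adversarial "optimal attack" outcome in the Cullina et al.\ model with nearness relation $\mathcal{B}$, where a point with true label $y$ is assigned adversary-preference toward $-y$.

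\textbf{Step 2: Match the shattering coefficients.} In the AVC definition the max ranges over feature vectors \emph{and their labels} $(\x_i, y_i)$, and the realizable pattern at index $i$ is $h(\text{adversarial move of }\x_i\text{ against }h)$, where the adversary pushes toward $-y_i$. In our $\sigma_n(\mathcal{H}, \{+r_0,-r_0\}, c)$ the max ranges over $(\x_i, r_i)$ with $r_i\in\{+r_0,-r_0\}$, and the realizable pattern is $h(\Delta_c(\x_i, r_i; h))$. By Step~1, choosing $r_i = -r_0$ reproduces exactly the adversarial behavior of a point with label $y_i = +1$ (adversary pushes toward $-1$), and $r_i=+r_0$ reproduces that of a point with label $y_i=-1$. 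Thus the two maximization problems range over the \emph{same} effective set — choosing $r_i$ in our model is the same as choosing $y_i$ in theirs — and for each choice the set of attainable label-patterns $\{(h(\Delta_c(\x_1,r_1;h)),\dots) : h\in\mathcal{H}\}$ equals the corresponding set in the AVC definition. Hence $\sigma_n$ agrees with the $n$-th AVC shattering coefficient for every $n$, and taking the supremum over $n$ with value $2^n$ gives AVC$(\mathcal{H},\mathcal{B}) = $ SVC$(\mathcal{H}, \{+r_0,-r_0\}, c)$.

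\textbf{The main obstacle} I anticipate is a careful handling of the definition in \cite{cullina:nips18}: their adversarial shattering coefficient is defined via a specific "corrupted labeling" formalism (the adversary may or may not be forced to move, and the pattern records classifier output on the chosen corrupted point), and one must verify that "point stays put when no beneficial move exists" in our model matches "adversary fails to flip / is indifferent" in theirs, including edge cases where $h(\x)$ already equals the adversary's target. A second, more subtle point is the role of tie-breaking: our model uses worst-case tie-breaking over the $\argmax$ set, but since all elements of the $\argmax$ set yield the same utility and — as argued — the same classifier \emph{output} value (either a $+1$-labeled neighbor is reachable or it isn't), the worst-case choice is well-defined and matches the adversarial optimum. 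I would make sure to address both in the full proof, but neither changes the correspondence; the argument is fundamentally a bijection-of-instances observation rather than a computation.
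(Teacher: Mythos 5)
Your proposal is correct and follows essentially the same route as the paper's proof: identify preferences $r_i=-r\,y_i$ with adversarial labels, observe that under $r$-consistency the strategic best-response output coincides with the adversarial corruption outcome within $\mathcal{B}$ (the corrupted-classifier loss of \citet{cullina:nips18}), and conclude the shattering coefficients agree for every $n$, hence the dimensions are equal. The paper's appendix argument is exactly this two-directional matching of shattering witnesses, so no substantive difference remains beyond your (welcome) extra care about tie-breaking at the cost-ball boundary.
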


 As a corollary of Proposition \ref{prop:svc=avc}, we know that SVC is in general larger than or at least equal to AVC when the strategic behaviors it induces include $\mathcal{B}$. This is formalized in the following statement. 

\begin{corollary}
\label{corr:svc-avcrelation}
Suppose a cost function $c$ is $r$-consistent with binary nearness relation $\mathcal{B}$ and $ \pm r \in R$, then we have  $$SVC(\cH, R, c)   \geq AVC(\cH, \mathcal{B}).$$ 
\end{corollary}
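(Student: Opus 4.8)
The plan is to reduce the corollary to Proposition \ref{prop:svc=avc} together with an elementary monotonicity property of the strategic shattering coefficients in the preference set $R$. The key observation is that enlarging $R$ can only increase the shattering coefficients: if $R' \subseteq R$, then for any $n$ the maximum in the definition of $\sigma_n(\cH, R', c)$ is taken over a subset $\cX^n \times (R')^n$ of the domain $\cX^n \times R^n$ appearing in $\sigma_n(\cH, R, c)$, so $\sigma_n(\cH, R', c) \le \sigma_n(\cH, R, c)$ for every $n$. Consequently $\text{SVC}(\cH, R', c) \le \text{SVC}(\cH, R, c)$ whenever $R' \subseteq R$.

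With this monotonicity in hand, the argument is short. First I would apply the hypothesis that $\pm r \in R$ to conclude $\{+r, -r\} \subseteq R$, so the monotonicity lemma gives $\text{SVC}(\cH, \{+r,-r\}, c) \le \text{SVC}(\cH, R, c)$. Next, since $c$ is $r$-consistent with $\mathcal{B}$ (and $r > 0$, which is implicit because $\pm r \in R$ and $r$-consistency is defined for $r>0$), Proposition \ref{prop:svc=avc} applies and yields $\text{AVC}(\cH, \mathcal{B}) = \text{SVC}(\cH, \{+r,-r\}, c)$. Chaining the equality and the inequality gives $\text{AVC}(\cH,\mathcal{B}) = \text{SVC}(\cH,\{+r,-r\},c) \le \text{SVC}(\cH, R, c)$, which is exactly the claimed bound.

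The only real content is the monotonicity lemma, and even that is essentially immediate from the definition of $\sigma_n$ as a maximum over a domain that grows with $R$; the single subtlety to check is that the set whose cardinality is being maximized — the set of realizable sign patterns $(h(\Delta_c(\x_1,r_1;h)), \dots, h(\Delta_c(\x_n,r_n;h)))$ over $h \in \cH$ — depends on the chosen $(\bm{\x},\bm{r})$ but not otherwise on $R$, so restricting to $(R')^n$ genuinely just restricts the feasible maximizers. I expect no serious obstacle here; the main thing to be careful about is simply invoking Proposition \ref{prop:svc=avc} with the correct parameter (the same $r$ for which $\mathcal{B}$ and $c$ are $r$-consistent) rather than an arbitrary element of $R$. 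One could also remark that the inequality can be strict — e.g. when $R$ contains preference values not reducible to the single scale $r$ — but that is a separate point (handled elsewhere in the paper) and not needed for the corollary.
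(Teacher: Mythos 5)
Your argument is correct and matches the paper's own proof: both establish monotonicity of the strategic shattering coefficients (hence of SVC) in the preference set from Definition \ref{definition:ssc}, note $\{+r,-r\}\subseteq R$, and then invoke Proposition \ref{prop:svc=avc} with the same $r$ for which $\mathcal{B}$ and $c$ are $r$-consistent. No gaps; your explicit statement of the monotonicity step is simply a slightly more detailed write-up of what the paper asserts in one line.
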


Corollary \ref{corr:svc-avcrelation} illustrates that for any cost function $c$, the  SVC with a rich preference set $R$  is generally no less than the corresponding AVC under the natural binary nearness relation that $c$ induces. One might wonder how large their gap can be. Our next result shows that for a general $R$  the gap between  SVC and AVC  can be \emph{arbitrarily large}  even in natural setups. The intrinsic reason is that a general preference set $R$ will lead to different extents of preferences (i.e., some data points strongly prefers label $1$ whereas some slightly prefers it). Such variety of   preferences gives rise to more  strategic classification outcomes and renders the SVC larger than AVC, and  sometimes significantly larger, as shown in the following proposition.

\begin{proposition}
\label{prop:svc>avc}
For any integer $n>0$, there exists a hypothesis class $\cH$ with point classifiers,  an instance-invariant cost function  $c(\z;\x) = l(\z - \x)$ for some \emph{metric} $c$ and preference set $R$ such that  $ SVC (\mathcal{H}, R, c)= n$ but $VC(\mathcal{H}) = AVC(\mathcal{H}, \mathcal{B}_c(r)) = 1$ for any $ r\in R$  where $\mathcal{B}_c(r) = \{ (\x,\z): c(\z;\x) \leq r \}$ is the natural nearness relation induced by $c$ and $r>0$.   
\end{proposition}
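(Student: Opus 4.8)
The plan is to construct, for each $n$, a concrete one-dimensional (or otherwise very simple) example in which point classifiers, under a suitable metric cost and a rich preference set $R$, can realize all $2^n$ labelings of $n$ points via strategic manipulation, even though the non-strategic VC-dimension of the class is only $1$. The guiding intuition is that a single point classifier $h_z$ (which outputs $+1$ only at a single feature value $z$) can, in the strategic world, be ``seen'' as outputting $+1$ on an entire region: a data point $(\x_i, r_i)$ will move to $z$ and get labeled $+1$ precisely when $r_i \geq c(z;\x_i) = l(z-\x_i)$, i.e.\ when $z$ lies in a ball around $\x_i$ of radius determined by $r_i$. By choosing the $n$ points $\x_1,\dots,\x_n$ and their preferences $r_1,\dots,r_n$ so that these $n$ balls are in ``general position'' — every one of the $2^n$ subsets is exactly the set of balls containing some common point $z$ — we get that varying $z$ over $\cH$ realizes all $2^n$ behavior vectors, so $\sigma_n(\cH,R,c)=2^n$.

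Concretely, I would work on the real line with $\X = \RR$, let $\cH = \{h_z : z \in \RR\}$ where $h_z(\x)=+1$ iff $\x=z$ and $-1$ otherwise (so $VC(\cH)=1$: no two points can both be labeled $+1$ non-strategically, hence $\cH$ shatters singletons but not pairs), and take $c(\z;\x)=|\z-\x|$, the standard metric — this is instance-invariant with $l(t)=|t|$. Place the points at $\x_i = i$ for $i=1,\dots,n$ and assign preference values $r_i$ from a set $R$ engineered so that the intervals $I_i = [\x_i - r_i,\ \x_i + r_i]$ have the property that for every subset $S \subseteq [n]$ there is a point $z_S \in \RR$ with $z_S \in I_i \iff i \in S$. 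One clean way to guarantee this: use nested/shifted interval structures, e.g.\ take the points on a line segment far apart and give them radii at geometrically separated scales so that the Venn diagram of the $n$ intervals on the line has all $2^n$ cells nonempty — this requires some care since $n$ intervals on a line generically have only $O(n^2)$ cells, so I would instead embed the construction in $\RR^2$ with the Euclidean (or another) metric, where $n$ disks can indeed be arranged so that all $2^n$ regions of their arrangement are nonempty (this is a classical fact for $n$ disks/half-planes in general position in the plane once we allow the disks to have suitably varied radii and centers). Then one must verify the best-response tie-breaking: when $z \in I_i$, the point $\x_i$ strictly prefers moving to $z$ (utility $r_i - l(z-\x_i) \geq 0 > -\epsilon$ for staying, assuming $r_i > 0$ strictly and the cost to stay is $0$ but staying yields label $-1$ with utility $0$) — actually I need $r_i - l(z-\x_i) > 0$, or handle the boundary via worst-case tie-breaking, so I would take the radii to be irrational / the points $z_S$ to lie strictly inside the chosen cells, avoiding boundaries entirely.

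The remaining checks are routine: (i) $VC(\cH)=1$ and $AVC(\cH,\B_c(r))=1$ for every $r>0$ — the latter because adversarially, each point with label $y_i$ wants the opposite label, and a point classifier $h_z$ labels at most one location $+1$; whichever way one sets the labels, $\cH$ cannot shatter two points in the adversarial sense for essentially the same reason it cannot non-strategically (two points cannot simultaneously be forced to the pattern $(+1,+1)$, since $h_z$ is $+1$ on a single point; one has to be slightly careful and invoke the definition of AVC from \cite{cullina:nips18}, but the $\leq 1$ bound is immediate and $\geq 1$ is trivial); (ii) $\sigma_n(\cH,R,c)=2^n$ by the arrangement-of-disks construction above, giving $SVC(\cH,R,c)\geq n$; and (iii) an easy matching upper bound $SVC \leq n$ if desired, though only $SVC = n$ is claimed so I would also argue $\sigma_{n+1}<2^{n+1}$ for the specific instance — or, more cleanly, simply fix the instance to have exactly $n$ points and note the claim only asserts $SVC(\cH,R,c)=n$ for a class/cost/$R$ tailored to that $n$, so exhibiting $n$ shattered points plus the generic upper-bound obstruction suffices.

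The main obstacle I anticipate is part (ii): arranging $n$ balls (under a fixed metric induced by a norm) whose arrangement has all $2^n$ nonempty cells, while keeping the metric genuinely a metric and the cost instance-invariant. On the line this is impossible for $n \geq 3$, so the construction must live in $\RR^2$ (or higher), and I would either cite the standard fact that $n$ disks in the plane in general position with varied radii realize all $2^n$ sign patterns, or give an explicit inductive placement (add the $k$-th disk large enough to ``split'' every existing cell). Everything else — the reduction from point classifiers to balls via the best-response condition $r_i \geq l(z - \x_i)$, the tie-breaking via strict interior points, and the trivial $VC = AVC = 1$ computations — is mechanical.
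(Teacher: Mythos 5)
Your high-level intuition (a point classifier becomes a ``ball'' classifier once strategic moves are allowed, so heterogeneous preferences can create rich label patterns) is the same one driving the paper's proof, but the concrete instantiation in a normed space breaks down at the step you yourself flag as the main obstacle, and at one more step you treat as routine. First, the combinatorial core---``arrange $n$ balls so that every one of the $2^n$ cells of their arrangement is nonempty''---is impossible where you place it: by Euler's formula an arrangement of $n$ circles in the plane has at most $n(n-1)+2$ faces, so all $2^n$ cells can be nonempty only for $n\le 3$; the ``standard fact'' you cite is the opposite of standard (it is classical that no Venn diagram on four or more circles exists), and the inductive fix fails because the $k$-th added disk creates at most $2(k-1)$ new cells, not one per existing cell. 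The same polynomial bound (of order $n^d$) holds for balls of any fixed norm in $\RR^d$, so your approach would need the ambient dimension to grow essentially linearly in $n$, which your proposal never arranges.

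Second, and fatally even if the Venn-diagram part were repaired, the required conclusion AVC$(\cH,\mathcal{B}_c(r))=1$ is false for point classifiers under any norm-induced metric on $\RR^d$: the AVC shattering coefficient maximizes over the choice of data points, so for any $r>0$ take two points whose $r$-balls overlap partially, give both true label $-1$ (equivalently preference $+r$ via Proposition \ref{prop:svc=avc}), and place the classifier's positive point outside both balls, in exactly one, or in the intersection; this realizes all four patterns, so AVC $\ge 2$. Translation invariance of the cost is the culprit: every point is ``sensitive'' at the same scale $r$. The paper's construction is designed precisely to avoid this. It works in the discrete space $[n]\cup 2^{[n]}$ with a hand-built metric in which the cost from point $i$ to a set $s$ is $i$ if $i\in s$ and $i+1$ otherwise, so the window of radii at which point $i$'s outcome is non-constant is $[i,i+1)$, disjoint across points---forcing VC $=$ AVC $=1$---while the heterogeneous preferences $r_i=i$ let the point classifier $h_s$ attract exactly the points $i\in s$, giving SVC $=n$. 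In short, the gap between SVC and AVC must come from decoupling the per-point sensitivity scales, and a cost of the form $l(\z-\x)$ for a norm on a fixed Euclidean space cannot provide that decoupling.
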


\subsection{SVC under Separable Cost Functions} Not only restricting the  set $R$ of preference values can reduce the SVC. This subsection shows that restricting to special classes of cost functions can also lead to a small SVC. One special  class of cost functions studied in many previous works  is the \emph{separable cost functions} \cite{hardt2016stratclass, milli:fat19, hu2019disparate}. 
Formally, a cost function $c(\z;\x)$ is separable if there exists function $c_1, c_2: \X \to \RR$ such that $c(\z;\x) = \max \big\{ c_2(\z) - c_1(\x), 0 \big\}$. 

The following Proposition \ref{prop:svc-separable} shows that when the cost function is separable,  SVC is at most 2 for \emph{any} hypothesis class $\cH$ and  any class of preference set $R$.\footnote{ The model of \cite{hardt2016stratclass} corresponds to the case $R = \{ 1\}$ in our model. For that restricted  situation, the proof of Proposition \ref{prop:svc-separable} can be simplified to prove SVC = 1 when $R = \{ 1\}$. It turns out that arbitrary preference set $R$ only increases the SVC by at most $1$.}  Therefore,  separable cost function essentially reduces any classification problem to a problem in lower dimension. Together with Theorem \ref{theorem:probdet-pac}, Proposition \ref{prop:svc-separable} also recovers the PAC-learnability result of   \cite{hardt2016stratclass} in their strategic-robust learning model (specifically, Theorem 1.8 of \citeyearpar{hardt2016stratclass})  and,  moreover, generalizes their learnability from  homogeneous agent preferences  to the case with \emph{arbitrary} agent preference values.   

\begin{proposition}\label{prop:svc-separable}
For any hypothesis class $\cH$, any preference set $R$ satisfying $0\not \in R$, and  any separable cost function $c(\z;\x)$, we have SVC$(\cH, R, c)\leq 2$. 
\end{proposition}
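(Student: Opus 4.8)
The plan is to use the reduction from the proof of Theorem~\ref{theorem:probdet-pac}: setting $\kappa_c(h)\colon (\x,r)\mapsto h(\Delta_c(\x,r;h))$ and $\tilde{\mathcal H}=\{\kappa_c(h):h\in\mathcal H\}$, one has $\mathrm{SVC}(\mathcal H,R,c)=\mathrm{VC}(\tilde{\mathcal H})$, so it suffices to bound $\mathrm{VC}(\tilde{\mathcal H})$. For a separable cost $c(\z;\x)=\max\{c_2(\z)-c_1(\x),0\}$, write $a_h=\inf\{c_2(\z):h(\z)=+1\}$ and $b_h=\inf\{c_2(\z):h(\z)=-1\}$ (with $\inf\emptyset=+\infty$). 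The first step is to evaluate the best response in \eqref{def:bestResponse}: comparing the best utility attainable among $+1$-labeled targets with that among $-1$-labeled targets, a point with $r>0$ satisfies $\kappa_c(h)(\x,r)=+1$ iff $h(\x)=+1$ or $a_h\le c_1(\x)+r$, and symmetrically a point with $r<0$ satisfies $\kappa_c(h)(\x,r)=-1$ iff $h(\x)=-1$ or $b_h\le c_1(\x)-r$.

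Next I would record the key identity $\min(a_h,b_h)=\inf_{\z\in\mathcal X}c_2(\z)=:\mu$, valid for every $h$ because $h^{-1}(+1)$ and $h^{-1}(-1)$ partition $\mathcal X$. Hence each $h$ falls into one of two regimes: if $a_h=\mu$ then, since $c_1(\x)+r>\mu$ for every $r>0$, all points with $r>0$ are labeled $+1$; if $a_h>\mu$ then necessarily $b_h=\mu$ and all points with $r<0$ are labeled $-1$. Consequently, restricted to the slice $\mathcal X\times R_{>0}$ (where $R_{>0}=R\cap(0,\infty)$), every $\kappa_c(h)$ is a function of the single scalar coordinate $\rho(\x,r):=\max(c_1(\x),\mu)+r$ of the threshold form $\rho\mapsto\II(\rho>a_h)$ (the regime $a_h=\mu$ producing the constant $+1$, as $\rho>\mu$ always). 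Therefore $\tilde{\mathcal H}$ restricted to $\mathcal X\times R_{>0}$ is contained in the class of one-dimensional threshold functions in the coordinate $\rho$, so its VC dimension is at most $1$; the mirror-image argument gives the same bound for the restriction to $\mathcal X\times R_{<0}$.

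To conclude, note that since $0\notin R$ any finite $S\subseteq\mathcal X\times R$ splits disjointly as $S=S^{+}\cup S^{-}$ with $S^{+}=S\cap(\mathcal X\times R_{>0})$ and $S^{-}=S\cap(\mathcal X\times R_{<0})$. If $S$ were shattered by $\tilde{\mathcal H}$, then for any dichotomy of $S^{+}$ we could extend it arbitrarily over $S^{-}$, realize the extension, and restrict back, so $S^{+}$ is shattered by $\tilde{\mathcal H}$ restricted to $\mathcal X\times R_{>0}$ and hence $|S^{+}|\le 1$; likewise $|S^{-}|\le 1$. Thus $|S|\le 2$, giving $\mathrm{VC}(\tilde{\mathcal H})\le 2$ and $\mathrm{SVC}(\mathcal H,R,c)\le 2$. (When $R=\{1\}$ the part $S^{-}$ is always empty, recovering the sharper bound $1$ noted in the footnote.)

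The main obstacle is making the best-response formula air-tight so that the ``escape-hatch'' disjuncts $h(\x)=\pm1$ genuinely collapse into the clean threshold statements: one must handle points already on their preferred side, infima over $h$-regions that may fail to be attained, and the worst-case tie-breaking for indifferent agents. The crucial point is that $h(\x)=+1$ forces $c_2(\x)\ge a_h$, and since a bona fide cost function satisfies $c(\x;\x)=0$ (i.e.\ $c_2\le c_1$) we get $a_h\le c_2(\x)\le c_1(\x)<c_1(\x)+r\le\rho(\x,r)$, so the threshold condition $\rho(\x,r)>a_h$ already holds and the $h(\x)=+1$ clause adds nothing; everything after that is routine bookkeeping.
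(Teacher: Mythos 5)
Your proof is correct, and its mathematical core is the same monotonicity phenomenon the paper exploits, but the packaging is genuinely different. The paper argues by contradiction on three shattered points: it orders them by budget, observes that the reachable regions $N(\x_i,r_i)=\{\z:c_2(\z)\le c_1(\x_i)+|r_i|\}$ are then nested, and pigeonholes two points whose $r_i$ share a sign, for which one sign pattern is unrealizable. You instead make the one-dimensionality explicit: after collapsing the ``already on the preferred side'' clause via $c_2\le c_1$, the strategic label on each sign-slice is a monotone threshold in the scalar budget $c_1(\x)+|r|$ (threshold $a_h$ or $b_h$), so each slice restriction of $\tilde{\cH}$ has VC dimension at most $1$, and since $0\notin R$ a shattered set meets each slice in at most one point. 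The two arguments encode the same fact---your threshold-in-$\rho$ statement is exactly the paper's nestedness of the regions $N(\x,r)$ for same-sign pairs---but your decomposition buys a cleaner interface with the $\kappa_c/\tilde{\cH}$ reduction of Theorem \ref{theorem:probdet-pac}, makes transparent why the bound is exactly $2$ (one threshold per sign class), and immediately recovers the footnote's bound of $1$ when $R=\{1\}$; the paper's pigeonhole version needs none of the bookkeeping with $a_h,b_h,\mu$ (indeed your regime split via $\min(a_h,b_h)=\mu$ and the $\max(c_1(\x),\mu)$ in $\rho$ are superfluous, since $c_1\ge c_2\ge\mu$ and the threshold form per slice already suffices). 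The boundary issues you flag (unattained infima, tie-breaking) are not a gap: the paper implicitly fixes the same convention by using the closed region $c_2(\z)\le c_1(\x)+|r|$, and your argument only needs that the per-slice label is nondecreasing in the budget, which holds for open or closed thresholds alike; your choice of the scalar $c_1(\x)+|r|$ (rather than the signed $c_1(\x)+r$ the paper writes when ordering its three points) is in fact the right quantity for the nestedness both proofs rely on.
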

The assumption $0 \not \in R$ means each agent must  strictly prefer either label $+1$ or $-1$. This assumption is necessary   since if $0 \in R$, SVC will be at least the classic VC dimension of $\cH$ and thus Proposition \ref{prop:svc-separable} cannot hold. We remark that the above SVC upper bound $2$ holds for any hypothesis class $\cH$. This bound $2$ is tight for some classes of hypothesis, e.g., linear classifiers.

\section{Strategic Linear Classification}\label{sec:linear}
This section instantiates our previous general framework in one of the most fundamental special cases, i.e., linear classification.  We will study both  the \emph{statistical} and \emph{computational} efficiency in   \emph{strategic linear classification}.   Naturally,  we will restrict $\X \subseteq \RR^d$ in this section.  Moreover, the cost functions are always assumed to be induced by semi-norms.\footnote{A function $l: \X \to \RR_{\geq 0}$ is a \emph{seminorm} if it satisfies: (1) triangle inequality: $l (\x + \z) \leq l(\x) + l(\z)$  for any $\x, \z \in \X$; and (2) 
homogeneity: $ l(\lambda \x ) =|\lambda| l(\x)$  for any $ \x \in \X, \lambda \in \RR$.} A linear classifier is defined by a hyperplane $  \w \cdot \x + b = 0$; feature vector $\x$ is classified as $+1$ if and only if $\w \cdot \x +b\geq 0$. With slight abuse of notation, we sometimes also call $(\w,b)$ a \emph{hyperplane} or a \emph{linear classifier}.  Let $\mathcal{H}_d$ denote the hypothesis set of all $d$-dimensional linear classifiers. For linear classifier $(\w,b)$,  the data point's best response can be   more explicit expressed as: $$ \Delta_c(x, r;\w,b) = \arg \max_{\z} \big[  \mathbb{I}(\w \cdot \z + b\geq 0) \cdot r - c(\z;\x) \big] .$$

\subsection{Strategic VC-Dimension of Linear Classifiers}
We first study the statistical learnability by examining the strategic VC-dimension (SVC). Our first result is a negative one, showing that SVC can be unbounded in general even for linear classifiers with features in $\RR^2$ (i.e., $\X \subset \RR^2$) and with simple preference set $R = \{ +1, -1\}$.

 \begin{theorem}
\label{thm:svcinfinite}
Consider strategic linear classification \probL{}.  There is an instance-wise cost function $c(\z; \x) = l_{\x}(\z -\x)$ where each $l_{\x}$ is a norm, such that $SVC(\cH_d, \cR, c) = \infty$ even when $\X \subset \RR^2$ and $\cR = \{1\}$.
\end{theorem}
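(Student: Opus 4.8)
The plan is to construct an explicit family of points in $\RR^2$ together with instance-wise norm-induced cost functions so that, using only preference value $r=1$, every binary labeling of $n$ chosen points can be realized by some linear classifier acting on manipulated features. The key freedom we exploit is that the cost function $l_{\x}$ can depend on $\x$, so each point sits at the center of its \emph{own} ``unit ball'' $\{\z : l_{\x}(\z-\x)\le 1\}$, and these balls can have wildly different shapes and sizes. A point with feature $\x$ will cross a hyperplane $(\w,b)$ to obtain label $+1$ if and only if doing so costs at most $1$, i.e.\ iff the (signed) distance from $\x$ to the halfspace $\{\w\cdot\z+b\ge 0\}$, measured in the norm $l_{\x}$, is at most $1$; equivalently iff $\x$ lies in the ``expanded'' halfspace obtained by translating the boundary by the support function of $l_{\x}$'s unit ball in direction $\w$.

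First I would set up the reduction to a purely geometric shattering question: after manipulation, classifier $(\w,b)$ assigns label $+1$ to point $i$ iff either $\w\cdot\x_i+b\ge 0$ already, or the $l_{\x_i}$-distance from $\x_i$ to the halfspace is $\le 1$. So the effective positive region for point $i$ is a halfspace whose boundary is a shifted copy of $\{\w\cdot\z+b=0\}$, shifted by an amount $\rho_i(\w)$ that depends on the shape of $l_{\x_i}$'s unit ball. The idea is to place the $n$ points in ``general position'' (say on a line, or on a moment-like curve) and design each unit ball $B_i$ so that $\rho_i$, as a function of the hyperplane direction, is essentially a free parameter. Then I want to show that for any target labeling $S\subseteq[n]$ I can pick a single direction $\w$ (or a small rotation of a fixed direction) and an offset $b$ such that the shifted halfspace for $i$ contains $\x_i$ exactly when $i\in S$.

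Concretely, one clean approach: put the points $\x_1,\dots,\x_n$ at locations $(i,0)$ on the $x$-axis, fix the candidate hyperplanes to be vertical lines $x=t$ with the positive side being $x\ge t$ (i.e.\ $\w=(1,0)$, $b=-t$). Then point $i$ manipulates to $+1$ iff the $l_{\x_i}$-cost of moving horizontally from $(i,0)$ to $(t,0)$ is at most $1$, which (by homogeneity) is $|t-i|\cdot l_{\x_i}(e_1)\le 1$ when $t<i$ (it moves rightward only if that decreases... careful: if $t\le i$ the point is already on the positive side). So I actually need both ``too far left'' and a mechanism for points already positive to sometimes be negative — which a single vertical line cannot give simultaneously. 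The fix is to also allow hyperplanes that are \emph{not} axis-aligned, or to exploit asymmetric norms vs.\ the symmetric requirement. Since the theorem only requires each $l_{\x}$ to be a norm (symmetric), I would instead use a 2D construction where the points are placed on a convex curve (e.g.\ $\x_i=(i,i^2)$) and the unit balls are extremely eccentric ellipses whose major axes point in carefully chosen directions; a hyperplane that is nearly tangent to the curve near a chosen arc will ``reach'' (via the long axis of $B_i$) exactly the points we want and no others. The standard trick is that $n$ points on a strictly convex curve can be ``shattered by halfspaces after an arbitrary monotone reweighting of reach'', and the instance-wise balls provide exactly that reweighting.

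The main obstacle — and the step I would spend the most care on — is verifying that a \emph{single} classifier simultaneously realizes the desired in/out status for all $n$ points, i.e.\ that the per-point reach functions $\rho_i(\w,b)$ can be decoupled enough that the $2^n$ sign patterns are all hit. I expect the cleanest route is an inductive/recursive construction: place $\x_i$ and choose $B_i$ after $\x_1,\dots,\x_{i-1}$ and $B_1,\dots,B_{i-1}$ are fixed, making $B_i$ so eccentric and so oriented that flipping point $i$'s membership can be done by an infinitesimal perturbation of the classifier that leaves all other points' memberships unchanged (a ``shattering by nested thresholds'' argument). I would then argue that there are $n$ ``independent dials'' and conclude $\sigma_n(\cH_d,\{1\},c)=2^n$ for all $n$, hence $\mathrm{SVC}=\infty$. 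A sanity check to include: confirm that each constructed $l_{\x}$ genuinely satisfies the triangle inequality and homogeneity (ellipse gauges always do), and that all manipulations are computed under the worst-case tie-breaking rule consistently.
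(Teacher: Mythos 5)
Your reduction of the problem to a geometric one is on the right track (point $i$ ends up positive iff the positive halfplane comes within $l_{\x_i}$-distance $1$ of $\x_i$, i.e.\ iff it meets the translated unit ball $\x_i+B_i$), and you correctly diagnose why collinear points with a single fixed direction cannot work. But the heart of the theorem --- exhibiting cost functions for which a \emph{single} classifier realizes each of the $2^n$ patterns --- is exactly the step you leave unproven, and the heuristic you offer in its place does not close it. For any fixed direction $\w$, varying the offset $b$ only sweeps through the patterns determined by sorting the values $\w\cdot\x_i+\rho_i(\w)$, i.e.\ a chain of at most $n+1$ patterns; so shattering forces you to use exponentially many directions, with a combinatorial assignment of directions to target subsets. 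Your ``independent dials / nested thresholds'' induction does not provide this: the ability to flip point $i$ by an infinitesimal perturbation near one particular classifier does not compose into a classifier for an arbitrary prescribed subset (nested thresholds, in particular, give chains, not shattering). The eccentric-ellipse idea is also unverified and dubious at this level of generality: an ellipse's reach function $\rho_i(\w)$ is large only on a pair of antipodal caps of directions, and a small number of arcs on the circle cannot cut out all $2^n$ required sign cells, so at minimum the placement terms $\w\cdot\x_i$ would have to do real work that you never specify.

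The paper's proof supplies precisely the missing combinatorial device, and it is worth contrasting with your plan: all $n$ points are placed at the \emph{same} location (the origin), $2^n$ directions $s_{\L}$ on the unit circle are put in bijection with the subsets of $[n]$, and the unit ball $G_i$ of point $i$ is defined as the origin-symmetrized convex hull of exactly those $s_{\L}$ with $i\in\L$ (a polygon, hence a genuine norm). Then for a target pattern $\L$ one takes a line that strictly separates the single vertex $s_{\L}$ from all other chosen directions: it cuts into $G_i$ (crossing cost strictly below $1$) precisely when $i\in s_{\L}$, and misses $G_i$ (crossing cost strictly above $1$) otherwise, which also settles the tie-breaking issue at cost exactly $1$ that your ``$\le 1$'' formulation leaves open. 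Without some construction of this kind --- one direction per subset, with each point's ball built to contain exactly the right directions --- your proposal remains a plan rather than a proof.
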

\begin{proof}[Proof Sketch]
We consider a set of data points $\{\x_i\}_{i=1}^n$ in $\mathbb{R}^2$ whose features are close but with cost functions induced by different norms. The cost functions are designed such that each point $\x_i$ is allowed to strategically alter its feature within a carefully designed polygon $G_i$ centered at the origin. Specifically, for any label pattern $\L \in \{+1,-1\}^n$, it has a corresponding node $s_{\L}$ on a unit cycle. The polygon $G_i$ for $\x_i$ is the convex hull of all $s_{\L}$s whose label pattern $\L$ classifies $i$ as $+1$. 
\begin{figure}[ht]
    \centering
    \includegraphics[scale=0.17]{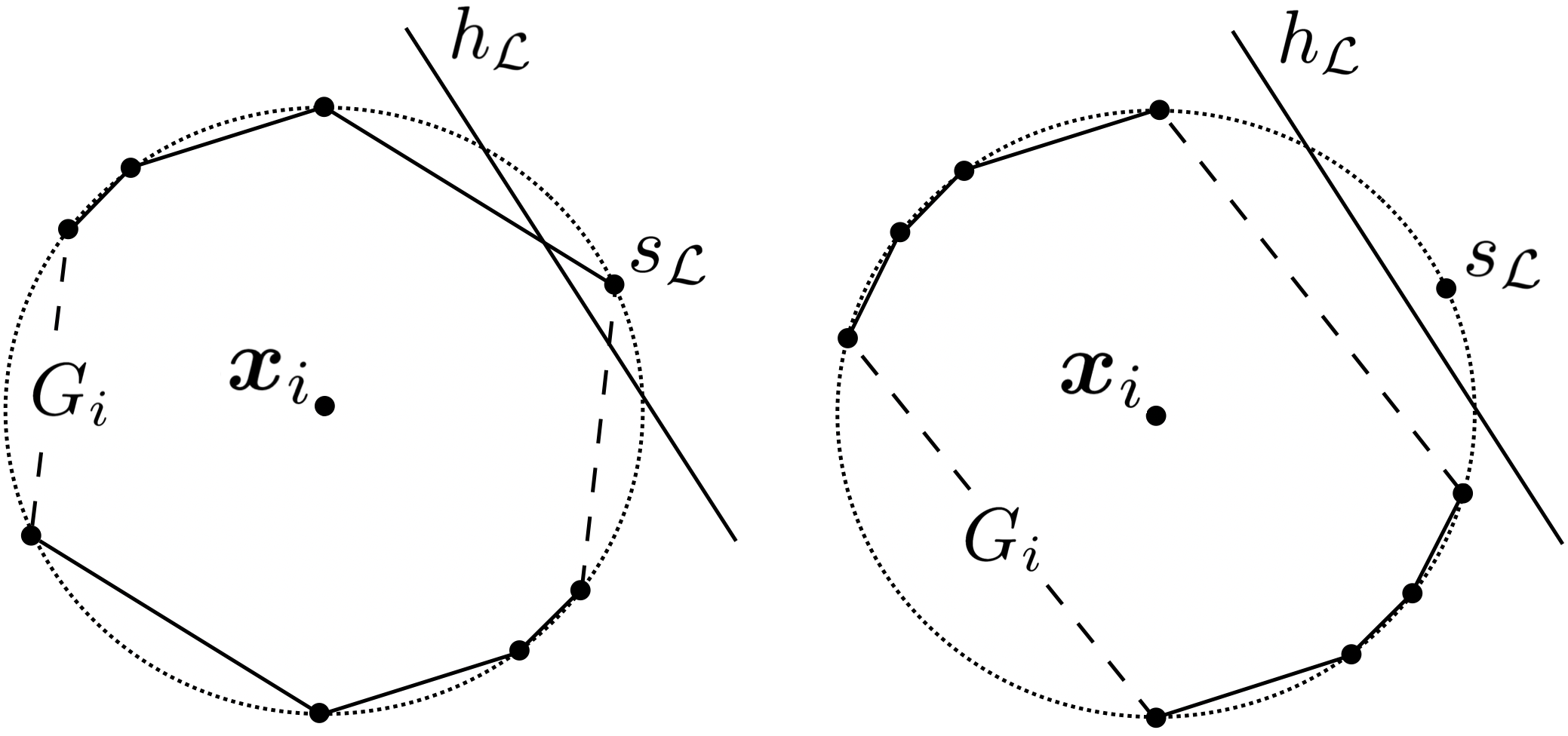}
    \caption{$G_i$ is the  polygon associated with $\x_i$'s cost function. Given any label pattern $\L \in \{+1,-1\}^n$, the linear classifier $h_{\L}$ is carefully chosen to induce exactly the label pattern $\L$ on $\{\x_i\}_{i=1}^n$. 
    }
    \label{fig:infsvcExample1}
\end{figure}
Figure \ref{fig:infsvcExample1} illustrates a high-level  idea of our construction. Under such cost functions, we   prove that $\cH$ can induce all $2^n$ possible label patterns. The formal  construction and proof can be found in Appendix \ref{append:svcinfinite}.
\end{proof}


In the study of adversarial VC-dimension (AVC)  by \cite{cullina:nips18},  the feature manipulation region of each data point is assumed to be \emph{instance-invariant}. As a corollary, Theorem \eqref{thm:svcinfinite} implies that AVC also becomes  $\infty$ for linear classifiers in $\RR^2$ if each data point's manipulation region is allowed to be different.

It turns out that the $\infty$-large SVC above is mainly due to the instance-wise cost functions. Our next result shows that under instance-invariant cost functions, the SVC will behave nicely and, in fact, equal to the AVC for linear classifiers  despite the much richer data point manipulation behaviors. This result also strictly generalizes the characterization of AVC by \cite{cullina:nips18} for linear classifiers and shows that linear classifiers will be no harder to learn statistically even allowing richer manipulation preferences of data points. 


 \begin{theorem}
\label{thm:strat-vc}
Consider strategic linear classification  \probL{}. For any \emph{instance-invariant} cost function $c(\z; \x) = l(\z -\x)$ where   $l$ is a semi-norm, we have SVC$(\cH_d, \cR, c)=d+1-\dim(V_l)$ for any bounded $\cR$, where $V_l$ is the largest linear space contained in the ball $\B=\{\x: l(\x)\leq 1\}$. 

In particular, if $l$ is a norm (i.e.,  $l(\x) = 0$ \emph{iff} $\x=0$), then  $\dim(V_l)=0$ and SVC$(\cH, R, c)=d+1$.  
\end{theorem}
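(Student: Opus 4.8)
I would reduce the statement to the case of a genuine norm and then prove matching lower and upper bounds on the strategic shattering coefficients $\sigma_n(\cH_d,\cR,c)$. The engine is that, for a norm cost, the strategic best response to a linear classifier has an explicit ``shifted halfspace'' form, which turns strategic linear classification into an ordinary non-strategic linear classification problem in one extra dimension with a constrained weight vector. As a first reduction, I would verify that $V_l$ is exactly the kernel $\{v:l(v)=0\}$ of the seminorm (a line through the origin lies in $\B$ iff $l$ vanishes on its direction, and the kernel is a linear subspace contained in $\B$). Since moving inside $\x+V_l$ is costless, any classifier whose normal $\w$ is not orthogonal to $V_l$ lets every data point reach both halfspaces at zero cost, so it produces only the one trivial pattern $(\mathrm{sign}(r_i))_i$ (here one uses $0\notin\cR$, as in Proposition~\ref{prop:svc-separable}, so that no point is indifferent about its label), and such classifiers do not change the VC dimension; for classifiers with $\w\perp V_l$ both the prediction and the cost descend to the quotient $\RR^d/V_l$, on which $l$ induces a genuine norm and the induced cost between fibres is the quotient norm of the displacement. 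Hence $\mathrm{SVC}(\cH_d,\cR,c)$ equals the SVC of strategic linear classification with a norm in dimension $d':=d-\dim V_l$, and it remains to show this is $d'+1$ (in the ``in particular'' case $l$ is already a norm and $d'=d$).

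\emph{Best response and the lower bound.} For a classifier $(\w,b)$ with $\w\neq 0$ and a norm cost, the minimal cost for a point on the wrong side to reach $\{\z:\w\cdot\z+b\geq 0\}$ is $|\w\cdot\x+b|/\|\w\|_*$, where $\|\w\|_*=\sup_{l(v)\leq1}\w\cdot v$ and the minimum is attained by compactness of the $l$-ball; a short case analysis on $\mathrm{sign}(r)$ and on which side $\x$ lies then shows the best response of $(\x,r)$ is labeled $+1$ iff $\w\cdot\x+b+r\|\w\|_*\geq 0$ (the borderline case $=0$ broken in a fixed way, immaterial to the VC dimension). For the lower bound I take $d'+1$ affinely independent points in $\RR^{d'}$, scale them by a large factor $\rho$, and give every point the same preference $\bar r\in\cR$ (the cases $\bar r>0$ and $\bar r<0$ are symmetric under $(\w,b)\mapsto(-\w,-b)$ and relabeling, and $\bar r=0$ is the classical bound): since the lifted points are linearly independent, every labeling $\L\in\{\pm1\}^{d'+1}$ is realized by some hyperplane, and for $\rho$ large enough (finitely many $\L$) that hyperplane has $l$-margin $>|\bar r|$ at every point, so the satisfied points stay put and the others cannot afford to cross, giving $\sigma_{d'+1}=2^{d'+1}$ and $\mathrm{SVC}\geq d'+1$.

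\emph{Upper bound.} Normalizing $\|\w\|_*=1$, the best-response rule reads $\kappa_c(\w,b)\colon(\x,r)\mapsto\mathrm{sign}(\w\cdot\x+r+b)$, so the class $\tilde\cH=\{\kappa_c(h):h\in\cH_{d'}\}$ is contained, up to boundary effects and the two constant classifiers ($\w=0$), in the class of halfspaces $\{(\x,r):\w\cdot\x+r+b\geq 0\}$ in $\RR^{d'+1}$ -- i.e.\ halfspaces whose normal's last coordinate is pinned to $1$ -- which is parametrized by only $(\w,b)\in\RR^{d'+1}$. Given any $d'+2$ augmented instances $(\x_i,r_i)$, each one defines the affine hyperplane $Z_i=\{(\w,b):\w\cdot\x_i+r_i+b=0\}$ in this $(d'+1)$-dimensional parameter space; an arrangement of $d'+2$ hyperplanes in $\RR^{d'+1}$ has at most $\sum_{k=0}^{d'+1}\binom{d'+2}{k}=2^{d'+2}-1$ cells, and the induced label pattern is constant on each cell, so at most $2^{d'+2}-1<2^{d'+2}$ patterns occur. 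Hence no $d'+2$ instances are shattered, and combined with the lower bound $\mathrm{SVC}(\cH_d,\cR,c)=d'+1=d+1-\dim V_l$.

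\emph{Main obstacle.} The crux is the best-response computation of the second step together with its use in the upper bound: the clean ``shift the halfspace by $r\|\w\|_*$'' description depends essentially on $l$ being a norm (so the distance to a halfspace is attained and scales homogeneously), which is exactly why the seminorm has to be pushed to the quotient first; and one must be careful that boundary ties and the degenerate classifiers do not secretly enlarge $\tilde\cH$ beyond a class of VC dimension $d'+1$ -- this is where a perturbation/continuity argument and the fixed tie-breaking convention are needed.
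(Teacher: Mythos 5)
Your proposal is correct in substance, and its skeleton coincides with the paper's up to the key computations: the signed-distance formula $|\w\cdot\x+b|/l^*(\w)$ (the paper's Lemma~\ref{lem:lp-distance}), the resulting best-response rule $\mathrm{sign}(\w\cdot\x+b+r\,l^*(\w))$, discarding classifiers with $\w\not\perp V_l$ as producing only the trivial pattern, and a scale-up-the-margins construction for the lower bound. Where you genuinely diverge is the finishing step of the upper bound. The paper stays in the primal: for $\w\perp V_l$ it considers the set $\cD$ of achievable signed-margin vectors $(\w\cdot\x_i+b)_{i}$, observes it lies in a linear space of dimension at most $d+1-\dim(V_l)$, and for larger $n$ produces a vector $\bar{\u}\perp\mathrm{span}(\cD)$ with $\bar{\u}\cdot\bm{\r}\le 0$, arguing that the patterns $\mathrm{sgn}(\bar{\u})$ and $(-1,\dots,-1)$ cannot both be realized. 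You instead quotient out the kernel $V_l$, absorb $r$ as an extra coordinate whose coefficient is pinned to $1$ after normalizing $l^*(\w)=1$, and count full-dimensional cells of the dual hyperplane arrangement in the $(d'+1)$-dimensional parameter space; this is the classical VC-of-halfspaces argument, more modular and reusable, while the paper's $\bar{\u}$-argument avoids the arrangement machinery and handles the seminorm degeneracy in place rather than via the quotient. The lower bounds are essentially identical in spirit (the paper scales $\bm{0}$ plus a basis by $\delta$; you scale $d'+1$ affinely independent points by $\rho$).

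Two points to tighten. First, a bookkeeping step you gloss over but genuinely need: your cell count bounds the patterns of the augmented class by $2^{d'+2}-1$, and the trivial classifiers ($\w\not\perp V_l$) contribute the additional pattern $\mathrm{sgn}(\bm{\r})$; since $(2^{d'+2}-1)+1$ is not smaller than $2^{d'+2}$, you must note, as the paper does explicitly, that $\mathrm{sgn}(\bm{\r})$ is already realized inside the augmented class (e.g.\ by $(\w,b)=(\bm{0},0)$, using $0\notin\cR$), so it adds no new pattern; likewise the small perturbation of $b$ that moves every realized tie-broken pattern into an open cell should be spelled out, as you anticipated. Second, your reliance on $0\notin\cR$ in the trivial-classifier step is shared by the paper's own proof (which speaks of the data point's ``preferred label $\mathrm{sgn}(r)$''), so it is a wrinkle of the theorem's ``any bounded $\cR$'' phrasing rather than a defect of your argument relative to the paper's.
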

\begin{proof}[Proof Sketch]
The key idea of the proof relies on a careful construction of a linear mapping which, given any set of samples $\{ (\x_i, y_i, r_i) \}_{i=1}^n$,  maps   the class of linear classifiers to  the vector space of points' utilities and moreover, the sign of each data point's utility will correspond exactly to the label of the data point after its strategic manipulation. Using the structure of this construction, we can identify  the relationship between the dimension of the linear mapping and the strategic VC-dimension. By bounding the dimension of the space of signed agent utilities, we are able to derive the SVC though with some involved algebraic argument to exactly pin down the dimension of the linear mapping due to possible degeneracy of $V_l$ ball.  
\end{proof}

\subsection{The Complexity of   Strategic Linear Classification} 
In this subsection, we turn our attention to the computational efficiency of learning.  The standard ERM problem for linear classification to minimize the 0-1 loss is already known to be NP-hard in the general agnostic learning setting \cite{feldman2012agnostic}. This implies that agnostic PAC learning by SERM is also NP-hard in our strategic setup. Therefore, our computational study will focus on the more interesting PAC-learning case, that is, assuming there exists a strategic linear classifier that perfectly separates all the data points. In the non-strategic case, the ERM problem can be solved easily by a linear feasibility problem. 

It turns out that the presence of gaming behaviors does make the resultant SERM problem significantly more challenging. We prove essentially tight computational tractability results in this subsection. Specifically, any strategic linear classification instance can be efficiently PAC-learnable by the SERM   when the  problem exhibits some ``adversarial nature''. However, the SERM problem  immediately becomes NP-hard even when we go  slightly beyond such adversarial situations. We start by defining what we mean by ``adversarial nature'' of the problem.

\begin{definition}[Essentially Adversarial Instances]
For any strategic classification problem \prob{}, let 
\begin{equation}
\begin{aligned}
      &  \text{ min}^- = \min\{r: (\x, y, r)   \text{ with }  y = -1\} \, \,   \text{ and } \\ & \, \,   \text{max}^+ = \max\{r:   (\x, y, r)  \text{ with }  y = +1 \}
\end{aligned}
\end{equation} 
be the minimum reward among all $-1$ points   and the maximum reward among all $+1$ points, respectively. We say the instance is  ``\emph{adversarial}'' if $\min^- \geq 0 \geq  \max^+$ and is ``\emph{essentially adversarial}'' if $\min^- \geq \max^+$. 
\end{definition} 

In other words, an instance is  ``adversarial'' if each data point would like to move to the opposite side of its label though with different magnitudes of preferences, and is ``essentially adversarial'' if any negative data point has a stronger preference to move to the positive side than any positive data point. Many natural settings are essentially adversarial, e.g., all the four examples  in Subsection \ref{sec:prelim:cases}. 

Our first main result of this subsection (Theorem \ref{thm:determistic-easy}) shows that when the strategic classification problem exhibits the above adversarial nature, linear strategic classification can be efficiently PAC-learnable by SERM. The second main result Theorem \ref{thm:determistic-hard} shows that the SERM problem  becomes NP-hard once we go slightly beyond the adversarial setups identified in Theorem \ref{thm:determistic-easy}. These results show that the computational tractability of strategic classification is primarily governed by the preference set $R$. Interestingly, this is in contrast to the statistical learnability results in Theorem \ref{thm:svcinfinite} and \ref{thm:strat-vc} where the preference set $R$ did not play much role. 

\begin{theorem}\label{thm:determistic-easy}
Any separable strategic linear classification instance \probL{} is efficiently PAC-learnable by the SERM in polynomial time in the following two situations:
\vspace{-4mm}
\begin{enumerate}
    \item The problem is \emph{essentially} adversarial ($\min^-  \geq  \max^+$) and  cost function $c(\z;\x)=l(\z-\x)$ is  instance-invariant and induced by  seminorm $l$. \vspace{-1mm} 
    \item  The problem is  adversarial ($\min^- \geq 0 \geq  \max^+$) and the instance-wise cost function $c(\z;\x) =l_{\x}(\z-\x)$ is induced by seminorms $l_{\x}$. 
\end{enumerate} 
\end{theorem}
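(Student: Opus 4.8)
The plan is to solve the SERM problem exactly in the realizable (``separable'') regime by reducing it to a polynomial-size \emph{convex feasibility program}, and then obtain the PAC guarantee from Theorem~\ref{theorem:probdet-pac}. Fix a classifier $(\w,b)$ and, for a displacement $\z$, write $\gamma_{\x}(\w)=\sup\{\w\cdot\z:\,l_{\x}(\z)\le 1\}\in[0,\infty]$ for the dual seminorm of $l_{\x}$ (it is finite iff $\w$ is orthogonal to the lineality space of the $l_{\x}$-ball, and $+\infty$ means the point can move at zero cost to its preferred side). By homogeneity of $l_{\x}$, the least cost for a point at $\x$ to reach the opposite closed halfspace is $|\w\cdot\x+b|/\gamma_{\x}(\w)$. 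Carrying out the worst-case best-response analysis, I would show that in an \emph{adversarial} instance (so $r_i<0$ on positive points and $r_j>0$ on negative points) a classifier has \emph{zero} strategic loss \emph{iff} $\w\cdot\x_i+b\ge h_i(\w)$ for every positive point and $\w\cdot\x_j+b+h_j(\w)<0$ for every negative point, where $h_k(\w):=\sup\{\w\cdot\z:\,l_{\x_k}(\z)\le|r_k|\}$ is the support function of the $|r_k|$-ball at the origin. Equivalently: every positive point's entire $|r_i|$-ball must land on the positive side and every negative point's entire $r_j$-ball strictly on the negative side --- precisely an adversarially-robust linear classification system, generalizing e.g.\ the $\ell_\infty$ setting of \citet{awasthi2019robustness} to balls induced by arbitrary seminorms. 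The one delicate point is tie-breaking at the decision boundary: a point that must \emph{escape} into the \emph{open} halfspace of its preferred label forces a strict inequality, whereas a point content to \emph{remain} on a closed halfspace needs only a non-strict one --- hence the asymmetry above.

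\textbf{Reducing ``essentially adversarial'' to ``adversarial'' for instance-invariant costs.} When $c(\z;\x)=l(\z-\x)$ is instance-invariant, $\gamma_{\x}(\w)=\gamma(\w)$ is common to all points, and the induced label of point $i$ is determined by the sign of $\w\cdot\x_i+b+r_i\gamma(\w)$. Choosing any $\tau\in[\max^+,\min^-]$ (nonempty precisely because the instance is essentially adversarial), I would write $r_i=\tau+(r_i-\tau)$ and absorb $\tau\gamma(\w)$ into a new bias $b':=b+\tau\gamma(\w)$. Since $b$ is unconstrained and $\gamma(\w)$ is a fixed function of $\w$, the map $b\mapsto b'$ is a bijection for every $\w$, so the essentially-adversarial instance is equivalent (same $\w$) to the \emph{adversarial} instance with bias $b'$ and shifted rewards $r_i-\tau$ --- which are $\le 0$ on positive points and $\ge 0$ on negative points --- to which Step~1 applies. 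This reparametrization is exactly what breaks for instance-wise costs, where the shift $\tau\gamma_{\x_i}(\w)$ differs across points and cannot be folded into a single bias; and this is unavoidable, since by Theorem~\ref{thm:svcinfinite} an essentially-adversarial (indeed $\cR=\{1\}$) instance-wise instance can already have infinite SVC. That is why the theorem claims only the \emph{adversarial} subcase for instance-wise costs, where $\tau=0$ works and no shift is needed.

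\textbf{Solving the convex program and concluding.} The system of Step~1 is convex in $(\w,b)$: each $h_k$ is a support function, hence convex in $\w$, so the positive-point constraints read ``concave $\ge0$'' and the negative-point constraints ``convex $<0$''. It is invariant under $(\w,b)\mapsto(\lambda\w,\lambda b)$ for $\lambda>0$, so I would normalize $\|\w\|_2\le 1$ and pin $b$ to its least admissible value $b(\w)=\max_{i:\,y_i=+1}\big(h_i(\w)-\w\cdot\x_i\big)$, reducing the task to $\min_{\|\w\|_2\le 1}\phi(\w)$ with $\phi(\w)=\max_{j:\,y_j=-1}\big(\w\cdot\x_j+b(\w)+h_j(\w)\big)$, a convex minimization over a compact set. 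Realizability forces $\min\phi<0$, so any minimizer $\w^{\star}$ with $b=b(\w^{\star})$ is a zero-loss classifier; the minimum is attained and computable in polynomial time by the ellipsoid method with a separation oracle that only evaluates each $h_k(\w)$ and returns a maximizing displacement --- a (semi)norm computation, and merely an LP when the $l_{\x_k}$ are polyhedral. Seminorm degeneracy is automatic: $h_k(\w)=+\infty$ unless $\w$ lies in the relevant subspace, so feasible $\w$ are confined to a linear subspace, and the constant classifier $\w=0$ is checked directly. Finally, since SERM is solved exactly, Theorem~\ref{theorem:probdet-pac} yields the PAC bound; the statistical term $\mathrm{SVC}(\cH_d,\cR,c)$ is at most $d+1$ by Theorem~\ref{thm:strat-vc} in the instance-invariant case, and in the adversarial instance-wise case --- where SVC may be infinite --- one instead observes that $\w^{\star}$ is pinned by $O(d)$ active training points (Carath\'{e}odory applied to $0\in\partial\phi$ on the ball), giving a sample-compression scheme of that size and hence realizable generalization.

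\textbf{The main obstacle.} The crux is the exact equivalence in Step~1: getting the worst-case best response right at the decision boundary (strict vs.\ non-strict, open vs.\ closed halfspace, and the $\gamma_{\x}(\w)=\infty$ degeneracy), because any slack there either breaks the convexity exploited in Step~3 or leaves one with an only approximately- rather than exactly-zero-loss classifier. The secondary subtlety is the statistical side of the adversarial instance-wise case, where SVC boundedness genuinely fails (Theorem~\ref{thm:svcinfinite}) and one must instead lean on the realizability/compression argument.
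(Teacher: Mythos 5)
Your proposal is correct in substance and reaches the same two tractability results, but by a genuinely different route than the paper. For the essentially adversarial, instance-invariant case the paper normalizes $l^*(\w)=1$, relaxes this non-convex constraint to $l^*(\w)\leq 1$, and then proves the relaxation is tight by rescaling an optimal solution and shifting the bias by $(\frac{1}{\alpha}-1)\frac{\min^-+\max^+}{2}$; your argument absorbs a threshold $\tau\in[\max^+,\min^-]$ into the bias up front, reducing the instance to an adversarial one with rewards $r_i-\tau$. These are two packagings of the same underlying insight (a common threshold between $\max^+$ and $\min^-$ can be folded into $b$ precisely because the cost, hence $l^*(\w)$, is instance-invariant), but your reduction is more transparent and lets both situations be handled by one convex program. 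For the adversarial, instance-wise case the paper keeps a slack $\epsilon$, observes the system is convex in $\w$ for each fixed $\epsilon$, and binary-searches over $\epsilon$; you instead eliminate $b$ via $b(\w)=\max_{i:y_i=+1}(h_i(\w)-\w\cdot\x_i)$ and minimize the convex function $\phi(\w)=\max_{j:y_j=-1}(\w\cdot\x_j+b(\w)+h_j(\w))$ over the Euclidean ball, using realizability to force $\min\phi<0$ --- this removes the binary search and handles the strict inequality cleanly, at the same level of rigor as the paper regarding oracle access to $l^*_{\x}$ and numerical precision. Your boundary/tie-breaking and seminorm-degeneracy bookkeeping matches the paper's conventions. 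One caveat: your closing compression-scheme remark for the statistical side of Situation 2 is only a sketch (uniqueness of the minimizer and a well-defined reconstruction map would need care); note, though, that the paper's own proof of this theorem establishes only polynomial-time solvability of SERM and its remark concedes SVC is infinite in that regime, so you are attempting to fill a gap the paper leaves open rather than deviating from its argument.
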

\begin{proof}[Proof Sketch]
For situation 1, we can formulate the SERM problem as the following feasibility problem:
\begin{lp*}
\find{  \w, b,   \epsilon >0   }
\sts
\qcon{  \w \cdot \x_i + b   \geq  -r_i     }{  y_i =1 }
\qcon{    \w \cdot \x_i + b  \leq -(r_i + \epsilon)   }{  y_i = -1 }
\con{ l^* (\w) = 1}
\end{lp*}
where $l^*$ is the dual norm of $l$. 

This unfortunately is not a convex program due to the non-convex constraint $l^* (\w) = 1$ --- indeed we prove later that this program is NP-hard to solve in general. Therefore, instead, we solve a relaxation of the above program, by relaxing constraint $l^* (\w) = 1$ to $l^* (\w) \leq 1$ to obtain a convex program. The crucial step of our proof is to show that this relaxation is tight under the \emph{essentially} adversarial assumption. This is proved by converting any optimal solution of the relax convex program to a feasible and optimal solution to the original problem. This is the crucial and also difficult step since the solution to the relaxed convex program may not satisfy $l^* (\w) = 1$ --- in fact, it will satisfy $l^* (\w) < 1$ generally which is why the original program is NP-hard in general. Fortunately, using the essentially adversarial assumption, we are able to employ a carefully crafted construction to generate an optimal solution the the above non-convex program.

For situation 2, we can formulate it as another \emph{non-convex} program with parameter $\w, \epsilon$:  
\begin{lp*} 
\find{  \w, b,   \epsilon >0   }
\sts
\qcon{  \w \cdot \x_i + b   \geq (-r_i) \cdot l^*_{\x_i}(\w)}{ r_i < 0}
\qcon{    -( \w \cdot \x_i + b )  \geq (r_i + \epsilon) \cdot l^*_{\x_i}(\w)  }{  r_i \geq 0 }
\end{lp*}
Fortunately, for any fixed $\epsilon>0$, the above program is convex in variable $\w$.   Moreover, if the system is feasible for   $\epsilon_0>0$ then it is feasible for any $0 < \epsilon \leq \epsilon_0$. This allows us to determine the feasibility  problem above for any $\epsilon$ via binary search, which overall is in polynomial time. 
\end{proof}
Our next result shows that the positive claim in Theorem \eqref{thm:determistic-easy} are essentially the best one can hope for. Indeed, the SERM immediately becomes NP-hard if one goes slightly beyond the two tractable situations in  Theorem \eqref{thm:determistic-easy}. Note that our results did not rule out the possibility of other computationally efficient learning algorithms other than the SERM. We leave this as an intriguing open problem for future works. 
\begin{theorem}\label{thm:determistic-hard}
Suppose the strategic classification problem is linearly separable, then the SERM Problem   for linear classifiers is NP-hard in the following two situations:
\begin{enumerate}
    \item Preferences are arbitrary and the cost function is instant-invariant and induced by the standard $l_2$ norm, i.e., $c(\z; \x) = \|\x - \z||_2^2$. 
     \item The problem is \emph{essentially} adversarial ($\min^-  \geq  \max^+$) and the cost function is  instance-wise and induced by  norms.
\end{enumerate} 
\end{theorem}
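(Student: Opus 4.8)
\begin{proofsketch}
The plan is, in both cases, to reduce a canonical NP-complete problem --- I would use (a suitable variant of) $3$-SAT --- to the realizable SERM feasibility problem, after first rewriting the strategic $0$-$1$ loss as a system of constraints on the classifier $(\w,b)$ in which the hardness is localised exactly at the data points that prefer their \emph{own} label.

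I would start by unfolding the loss. Under the cost $c(\z;\x)=\|\z-\x\|_2^2$, the minimum cost for a feature $\x$ to reach the halfspace $\{\z:\w\cdot\z+b\ge 0\}$ is $\big(\max\{0,-(\w\cdot\x+b)\}/\|\w\|_2\big)^2$. Unfolding the best response \eqref{def:bestResponse}, and introducing an auxiliary slack $\epsilon>0$ to absorb ties and strict inequalities exactly as in the proof of Theorem~\ref{thm:determistic-easy}, one finds that (up to the slack) a point $(\x_i,y_i,r_i)$ is classified correctly by $(\w,b)$ iff
\[
\begin{cases}
y_i(\w\cdot\x_i+b) \ge -\sqrt{|r_i|}\,\|\w\|_2, & y_i r_i>0,\\
y_i(\w\cdot\x_i+b) \ge \sqrt{|r_i|}\,\|\w\|_2, & y_i r_i<0,\\
y_i(\w\cdot\x_i+b) \ge 0, & r_i=0
\end{cases}
\]
(the square in the cost is what produces the $\sqrt{|r_i|}$ thresholds). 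All these constraints are positively homogeneous in $(\w,b)$, so we may normalise $\|\w\|_2=1$. The second family (the adversarial points) and the third are convex; the first is the complement of an open convex cone --- it says $\w$ must \emph{avoid} a forbidden cone --- and is non-convex. When the instance is adversarial ($\min^-\ge 0\ge\max^+$) no point contributes such a non-convex constraint, which is exactly why Theorem~\ref{thm:determistic-easy} gives tractability; the present theorem shows that even one point preferring its own label, or a switch to instance-wise norms, destroys this. For Part~2 the identical derivation goes through with $\|\w\|_2$ replaced by the dual norm $l^*_{\x_i}(\w)$, and the crucial difference from Theorem~\ref{thm:determistic-easy}(1) is that these dual norms now differ across points, so the convex relaxation that was tight there is no longer available.

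Next I would build the reductions. For Part~1, given a $3$-SAT formula on variables $x_1,\dots,x_n$, I would work in $\RR^{\Theta(n)}$ and combine (i) convex gadget points (with $r_i=0$ or adversarial $r_i$) that confine $(\w,b)$ to a small region encoding a candidate Boolean assignment, and (ii) one ``prefers-own-label'' point per clause, with its feature and reward chosen so that the forbidden cone $\{\w:\w\cdot\x_i+b<-\sqrt{r_i}\,\|\w\|_2\}$ it must avoid is exactly the set of assignments falsifying all three literals of that clause; then avoiding every forbidden cone is equivalent to satisfying the formula. A satisfying assignment yields a hyperplane of zero strategic loss, and conversely any zero-loss hyperplane rounds to a satisfying assignment, so running the SERM and checking whether the returned loss equals $0$ decides satisfiability. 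For Part~2 I would instead push the geometric encoding into the \emph{shapes} of the per-point unit balls --- using the freedom, already exploited in the construction behind Theorem~\ref{thm:svcinfinite}, to let each point's norm have an arbitrary centrally symmetric polytope as unit ball --- and realise the same variable and clause gadgets that way. Here one additionally sets the rewards so that every negative-label point carries reward $\ge m$ and every positive-label point reward $\le m$ for a common $m>0$: this keeps the instance \emph{essentially} adversarial ($\min^-\ge m\ge\max^+$) yet forces $\max^+>0$, so the hardness-inducing non-convex constraints are still present, while the instance stays strictly outside the adversarial regime covered by Theorem~\ref{thm:determistic-easy}(2).

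The step I expect to be the main obstacle is the gadget design. One must place each clause point and pick its reward so that its forbidden cone captures \emph{precisely} the falsifying assignments of the clause, and realise a genuine Boolean disjunction --- the ``$x_k$ true or false'' choice --- out of ``complement-of-cone'' together with convex constraints, all while keeping coordinates and rewards polynomially bounded. The soundness direction is the delicate part: showing that an unsatisfiable formula leaves the whole system --- an intersection of convex sets with a union of open cones deleted --- genuinely empty cannot be reduced to inspecting any one linear program and requires reasoning about how the forbidden cones are arranged. For Part~2 there is the extra bookkeeping that $\min^-\ge\max^+$ must hold \emph{simultaneously} over all gadget points: any positive-label point with $r_i>0$ forces $\max^+>0$ and hence pushes every negative-label point's reward up to at least that value, so the convex-constraint gadget points must be rescaled uniformly, and one must check this rescaling preserves their intended behaviour.
\end{proofsketch}
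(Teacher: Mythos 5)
Your unfolding of the strategic loss is correct and matches the paper's own reformulation (System~\eqref{lp:erm-deterministic}): under the normalization the adversarial points give convex constraints while each point that prefers its own label contributes the complement of an open convex cone, and you correctly locate the non-convexity there. But from that point on the proposal is a plan rather than a proof: the entire substance of an NP-hardness theorem is the explicit reduction and its soundness argument, and you defer exactly these. You do not construct the clause gadgets, you do not show that a ``forbidden cone'' can be made to coincide with the falsifying assignments of a 3-SAT clause while the variable gadgets simultaneously pin $(\w,b)$ to a discrete set of candidate assignments, and you explicitly flag the soundness direction (unsatisfiable formula $\Rightarrow$ empty feasible set) as an unresolved obstacle. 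For Part~2 the situation is the same, with the added unverified bookkeeping of keeping $\min^-\geq\max^+$ across all gadgets. So as it stands there is a genuine gap: no reduction has actually been exhibited, only the intention to build one.

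It may help to know that the paper's proof avoids clause gadgets entirely and is considerably lighter. It reduces from \textsc{Partition}: with $2d+3$ points in $\RR^d$ (two per coordinate plus three points along the direction $\c$ of the given integers), the convex constraints force $b=-2$, $\c\cdot\w=0$ and $|w_i|\leq 1/\sqrt{d}$, and feasibility of the normalized system ($l^*(\w)=1$) becomes the question of whether $\max\|\w\|_2^2$ over this box-slice equals $1$, which holds iff every $w_i=\pm 1/\sqrt{d}$, i.e.\ iff the sign pattern of $\w$ solves the partition instance. The Boolean choice is thus encoded directly in the sign of each coordinate of $\w$, with the non-convexity exploited once, globally, through the norm-equality constraint, rather than once per clause. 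Situation~2 then follows with no new construction at all: each positive-label point's reward is scaled down and its (now instance-wise) cost function scaled down proportionally so that every constraint in System~\eqref{lp:erm-essentialAdv} is unchanged while $\max^+\leq\min^-$ holds --- essentially the rescaling trick you sketch at the end, but applied per point to both reward and cost so that nothing needs re-verification. If you want to salvage your route, the honest comparison is: your 3-SAT scheme, if the gadgets could be made to work, would be a different and heavier reduction; the paper's partition-based argument shows that the discrete choice you are trying to engineer per clause can instead be extracted coordinate-wise from a single norm-maximization, which is why its correctness proof fits in a page.
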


\begin{remark}
Theorem \ref{thm:strat-vc}, Theorem \ref{thm:determistic-easy} and Theorem \ref{thm:determistic-hard} together imply that 
for strategic linear classification:\\
(1) the problem is efficiently PAC-learnable (both statistically and computationally)  when the cost function is instance-invariant and preferences are essentially adversarial;\\
(2) SERM can be solved efficiently but SVC is infinitely large when the cost function is instance-wise and   preferences are adversarial;\\
(3) the problem is efficiently PAC learnable  in statistical sense, but SERM is NP-hard when  the cost function is instance-invariant and preferences are arbitrary.    

\end{remark} 
\section{Summary}
In this work, we propose and study a general strategic classification setting where data points have different preferences over classification outcomes and different manipulation costs. We establish the PAC-learning framework for this strategic learning setting and characterize both the statistical and computational learnability result for linear classifiers. En route, we generalize the recent characterization of adversarial VC-dimension  \cite{cullina:nips18} as well as computational tractability for learning linear classifiers by \cite{awasthi2019robustness}. Our conclusion reveals two important insights. First, the additional intricacy of having different preferences harms the statistical learnability of general hypothesis classes, but \emph{not} for linear classifiers. Second, learning   strategic linear classifiers can be done efficiently only when the setup exhibits some adversarial nature and becomes NP-hard in general. 


Our learnability result for linear classifiers applies to cost functions induced by semi-norms. A future direction is to generalize the theory to cost function induced by asymmetric semi-norms or even any metrics. We also note that the strategic classification model we consider is under the full-information assumption, i.e., the cost function and the strategic preferences are transparent. This is analogous to the evasion attack in the adversarial machine learning literature, where the training data is supposed to be uncontaminated and the manipulation only happens during testing. What if we cannot observe the strategic preferences during training or do not know the adversaries' cost function? This can be reformulated as online learning through repeated Stackelberg games and has been studied in \cite{dong:ec18}, but it does not apply to classifiers with 0-1 loss. 

\vspace{3mm}
{\bf Acknowledgement. } We thank anonymous ICML reviewers for helpful suggestions.  H. Xu is supported by a GIDI award from the UVA Global Infectious Diseases Institute and a Google Faculty Research Award. A. Vullikanti is supported by NSF grants IIS-1931628, CCF-1918656,  NSF IIS-1955797, and NIH grant R01GM109718. R. Sundaram is supported by NSF grants CNS-1718286 and IIS-2039945.

\bibliography{main}
\bibliographystyle{icml2021}
\newpage
\onecolumn

\appendix

\section{Additional discussion on related work}\label{append:related}

 The work of \cite{hardt2016stratclass} is most relevant to our work---they introduced a Stackelberg game framework to model the interaction between the learner and the test data. Our model can be viewed as a  generalization of \cite{hardt2016stratclass} by allowing \emph{heterogeneous} preferences over classification outcomes.  \cite{hardt2016stratclass} assume a special class of  \emph{separably cost functions}, and prove that the optimal classifier is always a \emph{threshold} classifier. Essentially, the assumption of separable cost functions  reduces the feature space to a low dimension, which is also why the strategic VC dimension in this case is at most $2$ as we proved.  Despite this clean characterization, it appears a strong and somewhat unrealistic requirement. For example, one consequence of  separable cost functions is that  for \emph{any} two features $\x, \z$, the manipulation cost from either $\x$ to $\z$ or from $\z$ to $\x$ must be $0$.\footnote{A cost function $c(\z; \x)$ is separable if there exists two functions $c_1, c_2: \X \to \RR$ such that $c(\z; \x) = \max \{ c_2(\z) - c_1(\x), 0 \}$. Since $c(\x; \x) = 0$, we have $c_2(\x) \leq c_1(\x)$ for any $\x$. Therefore, $c_2(\x) + c_2(\z) - c_1(\x) - c_1(\z) \leq 0$. Consequently, either $c_2(\x)  - c_1(\z) \leq 0$ or $ c_2(\z)- c_1(\x) \leq 0$, yielding either $c(\z; \x) =0$ or $c(\x; \z) =0$.} This appears unrealistic in reality. For example, a high-school student  with true average  math grade $80$ and true average literature grade  $95$ is likely to incur cost if she/he wants to appear as $95$ for math and $80$ for literature,  and vice versa. This is  because  different students are good at different aspects. Our model imposes less assumptions on the cost functions. For example, in our study of strategic linear classification, the cost functions are induced by arbitrary semi-norms.  

\cite{bruckner2011stackelberg} is one of the first to consider the Stackelberg game formulation of strategic classification, motivated by spam filtering; however they do not study generalization bounds. \cite{zhangincentive} provide the sample complexity result for strategic PAC-learning under the homogeneous preference setting and in particular study the case under the incentive-compatibility constraints, i.e.,  subject to no data points will misreport features. These two works all assume the positive labels are always and equally preferred. There has also been work on understanding the social implications of strategically robust classification~\cite{akyol2016price,milli:fat19,hu:fat19}; these works show that improving the learner's performance may lead to increased social burden and unfairness. \cite{dong:ec18,chen2020learning} extend strategic linear classification to an online setting where the input features are not known a-priori, but instead are revealed  in an online manner. They both focused on the optimization problem of regret minimization.   Our setting however is in the more canonical PAC-learning setup and our objective is to design statistically and computationally efficient learning algorithms. All these aforementioned works, including the present work,  consider \emph{gaming} behaviors. A relevant but quite different line of recent works study \emph{strategic improvements} where the manipulation does really change the inherent quality and labels \cite{kleinberg2019classifiers,miller2019strategic,ustun2019actionable,bechavod2020causal,pmlr-v119-shavit20a}. The question there is mainly to design  incentive mechanisms to encourage agents' efforts or improvements.  


Finally, going beyond classification, strategic behaviors in machine learning has received significant recent attentions, including in regression problems \cite{Perote2004StrategyproofEF,dekel2010incentive,chen:ec18}, distinguising distributions \cite{zhang2019distinguishing,zhang2019samples}, and learning for pricing \cite{amin2013learning,mohri2015revenue,vanunts2019optimal}. These are similar in spirit to us, but study a completely different set of problems using different techniques. Their results are not comparable to ours. 

\section{Omitted Proofs from Section \ref{sec:stratvc}} 
\subsection{Proof of Theorem \ref{theorem:probdet-pac}}\label{sec:append:thm1}
\begin{proof}

Let $\Y=\{+1,-1\}$. Define another binary hypothesis class $\tilde{H}=\{\kappa_c(h): h\in H\}$, where $\kappa_c:(\X \xrightarrow{} \Y) \xrightarrow{} (\X \times R \xrightarrow{}\Y)$  is a mapping such that $\kappa_c(h) (\x, r)=h (\Delta_c(\x, r; h)), \forall (\x, r) \in \cX \times R$. Note that the input of classifier $\kappa_c(h)$ consists of both the feature vector $\x$ and the preference $r$. By the definition of SVC, we have VC$(\tilde{H})=$SVC$(\cH, R, c)=d$.

Given any distribution $\D$, cost function $c$, and $h\in \cH$, the strategic 0-1 loss of $h$ is $L_{c}(h, \D)=\mathbb{E}_{(\x,y,r)\sim \D}\Big[\mathbb{I} \big[ \kappa_c(h) (\x, r) \not = y \big]\Big]=L(\kappa_c(h), \D)$, where $L(\tilde{h}, \D)$ is the standard expected risk of the newly defined $\tilde{h} \in \tilde{\cH}$ under the distribution $\D$ in the non-strategic setting. Therefore, studying the PAC sample complexity upper bound for $\H$ under the strategic setting $\langle R,c \rangle$ is equivalent to  studying the sample complexity for $\tilde{H}$ in the non-strategic setting. The latter problem can be addressed by employing the standard PAC learning analysis. From the Fundamental Theorem of Statistical Learning (Theorem 6.8 in \cite{shalev2014understanding}), we know $\tilde{H}$ is agnostic PAC learnable with sample complexity $O(\epsilon^{-2}(\text{VC}(\tilde{H})+\log\frac{1}{\delta}))$, meaning that there exists a constant $C$ such that for any $(\delta,\epsilon) \in (0,1)^2$ and any distribution $\D$ for $(\x, y, r)$, as long as $n\geq C\cdot\epsilon^{-2}(\text{VC}(\tilde{H})+\log\frac{1}{\delta})$, with at least probability $1-\delta$, we have $$L(\tilde{h}^*, \D) - \inf_{\tilde{h}\in\tilde{H}} L(\tilde{h},\D) \leq \epsilon ,$$ where $\tilde{h}^*$ is the solution of ERM with $n$ i.i.d. samples from $\D$ as input. Let $h^*$ be the solution of the corresponding SERM conditioned on the same $n$ i.i.d. samples from $\D$. By the definition of $\tilde{H}$ and $L_c$, we have $L_{c}(h^*, \D)=L(\tilde{h}^*, \D)$, and $ \inf_{h\in\cH} L_{c}(h,\D)=\inf_{\tilde{h}\in\tilde{H}} L(\tilde{h},\D)$. Therefore, with probability $1-\delta$, we have

\begin{align*}
L_{c}(h^*, \D) - \inf_{h\in\cH} L_{c}(h,\D)\leq \epsilon,
\end{align*}
which implies \prob{} is agnostic PAC learnable with sample complexity $O(\epsilon^{-2} [d+\log(\frac{1}{\delta})])$ by the SERM.

\end{proof}
 
\subsection{Proof of Proposition \ref{prop:svc=avc}}
\begin{proof}
The adversarial VC-dimension defined in   \cite{cullina:nips18} relies on an auxiliary definition of \emph{corrupted classifier} $\tilde{h} = \kappa_R(h)$ of any classifier $h$ for the standard non-adversarial setting such that $\tilde{h}(\x) = h(\x)$ if all the points in $N(\x)$ have the same label as $\x$ and otherwise, $\tilde{h}(\x)=\perp$. Recall that $N(\x)=\{\z\in \cX:(\z;\x) \in \B\}=\{\z\in \cX: c(\z;\x) \leq r \}$ denotes the set of all possible adversarial features $\x$ can move to. Given this auxiliary definition, the  adversarial VC-dimension  is defined as AVC$(\mathcal{H}, \B) = \sup\{n: \sigma_n(\mathcal{F}, \B) = 2^n\}$, where

\begin{equation}\label{eq:sscavc}
  \sigma_n(\mathcal{F}, \B)=\max_{(\x, \y)\in \cX^n\times \{+1,-1\}^n}|\{(f(\x_1, y_1;h),\ldots,f(\x_n, y_n;h)): h\in\mathcal{H}\}|  
\end{equation}
is the shattering coefficient, and  $f(\x_i, y_i) = \mathbb{I}(\tilde{h}(\x_i)\neq y_i)$ is   the \emph{loss function} of the corrupted classifier $\tilde{h} = \kappa_R(h)$.

Since $\B$ and $c$ are $r$-consistent, we have $\mathcal{B} = \{(\z;\x): c(\z;\x) \leq r \}$. Let $R = \{+r, -r\}$.  We now prove the proposition by arguing
\begin{equation}
\label{eq:ssc=ssc}
    \sup \{n \in \mathbb{N}: \sigma_n(\cH, \cR,c)=2^n \}=\sup\{n: \sigma_n(\mathcal{F}, \B) = 2^n\}.
\end{equation}
\begin{enumerate}
    \item If $\sup \{n \in \mathbb{N}: \sigma_n(\cH, \cR,c)=2^n \}=n$, by Definition \ref{definition:ssc}, there exists  $(\x'_i, r'_i) \in \cX \times R, i=1,\cdots,n$ such that $|\{(h(\Delta_c(\x'_1, r'_1; h )) , \cdots, h(\Delta_c(\x'_n, r'_n;h )):h\in \cH\}|=2^n$. Since Definition \ref{definition:ssc} does not rely on the true labels of $\x'_i$, we may let the true labels of $\x'_i$ be $y'_i=-r'_i/r$ for any $i$. In this case, each $\x'_i$'s strategic preference is against its true label, which corresponds to the loss function $f$ in Equation \eqref{eq:sscavc} for the adversarial setting. Therefore, taking $(\x_i, y_i)=(\x'_i, y'_i)$ in Equation \eqref{eq:sscavc} gives $\sigma_n(\mathcal{F}, \B)=2^n$. This implies $\sup \{n \in \mathbb{N}: \sigma_n(\cH, \cR,c)=2^n \} \leq \sup\{n: \sigma_n(\mathcal{F}, \B) = 2^n\}$.
   \item Conversely, if $\sup\{n: \sigma_n(\mathcal{F}, \B) = 2^n\}=n$, from Equation \eqref{eq:sscavc}, there exists $(\x_i, y_i) \in \cX \times R, i=1,\cdots,n$ such that $|\{(f(\x_1, y_1),\ldots,f(\x_n, y_n)): f\in\mathcal{F}\}|=2^n.$ Similarly, taking $r_i=-r y_i \in R$  gives $\sigma_n(\mathcal{H}, R, c)=2^n$, which implies $\sup \{n \in \mathbb{N}: \sigma_n(\cH, \cR,c)=2^n \} \geq \sup\{n: \sigma_n(\mathcal{F}, \B) = 2^n\}$.
\end{enumerate}
Therefore, we have  AVC$(\cH, \mathcal{B})=$SVC$(\cH, \{+r,-r\}, c)$ for any $r$-consistent pair $(\B, c)$.  
\end{proof}

\subsection{Proof of Corollary \ref{corr:svc-avcrelation}}
\begin{proof}
Since $\{+r, -r\}\subseteq \B$, we have $\sigma_n(\cH, R, c)\geq \sigma_n(\cH, \{+r, -r\}, c)$ by Definition \ref{definition:ssc}. As a result, SVC$(\cH, R, c) \geq $SVC$(\cH, \{+r, -r\}, c)$. Then by applying Proposition \ref{prop:svc=avc} we have \\SVC$(\cH, R, c) \geq $SVC$(\cH, \{+r, -r\}, c)=$AVC$(\cH, \mathcal{B})$.
\end{proof}

\subsection{Proof of Proposition \ref{prop:svc>avc} }\label{append:prop:svc>avc}

Given any positive integer $n$, let $[n]$ denotes $\{1,2,\cdots,n\}$, and $\mathcal{S}$ be the power set of $[n]$, i.e., the set that contains all the subsets of $[n]$. Let $\mathcal{X}=[n]\cup \mathcal{S}$ be the sample space of size $n+2^n$, and the hypothesis class $\cH$ is the set of all the point classifiers with points from   $\mathcal{S}$, i.e., $\cH=\{h_{s}: s \in \mathcal{S}\}$, where point classifier $h_{s}$ only classifies the point $s \in \mathcal{S}$ as positive. The cost function $c(z;x) $ is   a metric defined as follows. Since metric is symmetric, i.e., $c(z;x) = c(x; z)$, we will use the notation $c(x,z)$ instead throughout   this proof.  

\begin{equation} \label{eq:costfunc}
c(x,z)=
    \begin{cases}
     x,   & \text{if} \quad x\in [n], z\in \mathcal{S}, x \in z \\ 
     x+1,   &\text{if} \quad x\in [n], z\in \mathcal{S}, x \notin z \\ 
     c(z, x),   &\text{if} \quad x\in \mathcal{S}, z\in [n] \\
     x+z,   & \text{if} \quad x,z\in [n], x\neq z\\
     1, & \text{if} \quad x,z\in \mathcal{S}, x\neq z \\ 
     0, & \text{if} \quad  x=z,
    \end{cases}
\end{equation}
and $R$ is set to be $[-n, -1] \cup [1, n]$. 

First, we verify that $c(\cdot, \cdot)$ is indeed  a metric. Given the Definition \eqref{eq:costfunc}, it is easy to see that $c(x,z)=0$ iff $x=z$, and $c(x,z)=c(z,x), \forall x,z \in \cX$. It remains to check the triangle inequality, i.e., for any $x,y,z\in \cX$, $c(x,y)+c(y,z)\geq c(x,z)$. 
Consider the case when $x, y, z$ are different elements in $\cX$. By enumerating all the possibility that whether each $x,y,z$ is in $[n]$ or $\mathcal{S}$, it suffices to discuss the following $8(=2^3)$ cases: 

\begin{enumerate}
    \item if $x,y,z \in [n]$, $c(x,y)+c(y,z) = x+y+y+z > x+z = c(x,z)$.
    \item if $x,y,z \in \mathcal{S}$, $c(x,y)+c(y,z) = 2 > 1 = c(x,z)$.
    \item if $x,z \in [n], y\in \mathcal{S}$, then $c(x,y)\geq x, c(y,z)\geq z$. $\Longrightarrow$ $c(x,y)+c(y,z) \geq x+z = c(x,z)$.
    \item if $x,y \in [n], z\in \mathcal{S}$, we need to show that $c(x,y)\geq c(x,z)-c(y,z)$. Conditioned on the relationship between $x,y$ and set $z$, the maximum value of $c(x,z)-c(y,z)$ is $x-y+1$ when $y \in z, x\notin z$. Therefore, $c(x,y)=x+y \geq x-y+1 \geq c(x,z)-c(y,z)$.
    \item if $x,z \in \mathcal{S}, y\in [n]$, then $c(x,y)+c(y,z) \geq y+y > 1 \geq c(x,z).$ 
    \item if $x,y \in \mathcal{S}, z\in [n]$, then the maximum value for $c(x,z)-c(y,z)$ is $z+1-z=1$ when $z \notin x, z\in y$. Therefore, $c(x,y)\geq 1 \geq c(x,z)-c(y,z).$
    \item if $x\in\mathcal{S}, y, z \in [n]$, it is equivalent to case 4.
    \item if $y, z \in \mathcal{S}, x\in [n]$, it is equivalent to case 6.
\end{enumerate}

Next, we show VC$(\cH)=1$, AVC$(\cH, \mathcal{B}_c(r) )=1$, and SVC$(\cH, \cR, c)\geq n$. Observe that VC$(\cH)=1$ follows easily since no point classifier $h_s \in \cH$ can generate the label pattern $(+1,+1)$ for any pair of distinct data points.

Next we prove AVC$(\cH, \mathcal{B}_c(r))=1$. We first show AVC$(\cH, \mathcal{B}_c(r))\leq1$ by   arguing that under binary nearness relation $\mathcal{B}_c(r) = \{(z;x): c(z,x) \leq r \}$  with $ r \geq 1$, any two elements $x_1, x_2$ in $\cX$ cannot be shattered by $\cH$.

\begin{enumerate}
    \item If at least one of $r_1, r_2$ equals $-r$, e.g., $r_1=-r$, we show that $x_1$ can never be classified as $+1$ by contradiction. Suppose some $h_s \in \cH$ classifies $(x_1, -r)$ as $+1$: if $x_1 \neq s$, since $r_1=-r<0$, $x_1$ will not manipulate its feature and be classified as $-1$; if $x_1 = s$, $x_1$ can move to any $z \in \S$ with cost $1 \leq r$, and will also be classified as $-1$. Therefore, $(x_1, x_2)$ can not be shattered. 
    \item If $r_1=r_2=r$, consider the following two cases:
    \begin{enumerate}
        \item If at least one of $x_1, x_2$ belongs to $\S$, e.g., $x_1 \in \S$, then $x_1$ can move to any $s\in \S$ as $c(x_1, s)=1 \leq r$ for any $s\in \S$. Therefore $x_1$ can never be classified as $-1$ by any point classifier in $\cH$.
        \item if $x_1, x_2 \in [n]$, we may w.l.o.g. assume $x_1<x_2$, i.e., $ x_1+1 \leq x_2$. Observe that when $r<x_1$, any $h_s \in \cH$ will classify $x_1$ as -1 because $c(x_1, s)=x_1 > r, \forall s\in \S$; when $r \geq x_1+1$, any $h_s \in \cH$ will classify $x_1$ as +1 because $c(x_1, s)=x_1+1 \leq r, \forall s\in \S$. Therefore, in order to shatter $(x_1, x_2)$, $r$ must lie in the interval $[x_1, x_1+1) \cap [x_2, x_2+1) = \emptyset$, which draws the contradiction.
    \end{enumerate}
\end{enumerate}

To see that AVC$(\cH, \mathcal{B}_c(r))\geq 1$, for any $x \in [n]$ with $r>0$, it can be classified as either $+1$ or $-1$ as long as $r\in [x,x+1)$. We thus have AVC$(\cH, c)=1$. 

Finally, we prove that SVC$(\cH, R, c) = n$. Consider the subset $[n] \subset \cX$ of size $n$, with each element $i$ equipped with a strategic preference $r_i=i$. For any label pattern $\L \in \{+1, -1\}^n$, let $s_{\L}=\{i \in [n]: \L_i=+1\}$ be an element in $\S$. We claim that $h_{s_{\L}} \in \cH$ gives exactly the label pattern $\L$ on $[n]$. To see this, consider any $i \in [n]$:
\begin{enumerate}
    \item If $i \in s_{\L}$, $i$ will move to $s_{\L} \in \mathcal{S}$ and be classified as $+1$, as the cost $c(i, s_{\L})=i\leq r_i=i$. 
    \item If $i \notin s_{\L}$, $i$ will not move to $s_{\L} \in \mathcal{S}$ and be classified as $-1$, as the cost $c(i, s_{\L})=i+1 > r_i=i$. 
\end{enumerate}

Therefore, any label pattern $\L \in \{+1, -1\}^n$ can be achieved by some $h_{s_{\L}} \in \cH$. This implies SVC$(\cH, R, c) \geq n$. On the other hand, it's easy to see $\cH$ cannot shatter $n+1$ points, because any subset of size $n+1$ must contain an element $s_0$ in $\mathcal{S}$, and no matter what strategic preference $s_0$ has, it will either be classified as $+1$ by all $h_s \in \cH$, or be classified as $+1$ by only one classifier in $\cH$, i.e., $h_{s_0}$. Either case renders the shattering for $n+1$ points impossible.
 
 
\subsection{Proof of Proposition \ref{prop:svc-separable}}
\begin{proof}
Define the adversarial region for an adversary $(\x, r)$ as $N(\x, r)=\{\z \in \cX:c_2(\z) \leq c_1(\x) + |r|\}\supseteq \{\x\}$. Since staying with the same feature has no cost, this implies $c(\x;\x)=0$ or equivalently $c_2(\x) \leq c_1(\x)$ for any $\x\in \cX$.  Then, the best response function for $(\x, r)$ can be characterized by 
\begin{enumerate}
    \item if $h(\x)=\text{sgn}(r)$, then $h(\Delta(\x, r;h))=\text{sgn}(r)$;
    \item if $h(\x)=-\text{sgn}(r)$, then \begin{equation}\label{eq:fdeltax}h(\Delta(\x, r;h))=\begin{cases}
    -\text{sgn}(r), \quad \forall \z\in N(\x, r): h(\z)=-\text{sgn}(r)\\ \text{sgn}(r),\quad \exists \z\in N(\x, r): h(\z)=\text{sgn}(r)
    \end{cases}\end{equation}
\end{enumerate}

Suppose there are three points $\{(x_i, r_i)\}_{i=1}^3$ that can be shattered by $\cH$. Let $b_i = c_1(\x_i)+r_i$ and w.l.o.g. let $b_1\leq b_2\leq b_3$. From $b_1\leq b_2\leq b_3$, we have $N(\x_1, r_1) \subseteq N(\x_2, r_2)\subseteq N(\x_3, r_3)$. 

By Pigeonhole principle, there must exists two elements in $\{r_1,r_2,r_3\}$ which have the same sign. Suppose these two elements are $r_1, r_2$ and consider the following two cases:

\begin{enumerate}
    \item $r_1>0, r_2>0$. From Equation \ref{eq:fdeltax}, for any $h \in \cH$, $h(\Delta(\x_2, r_2;h))=-1$ means $h(\z)=-1, \forall \z\in N(\x_2, r_2)$. Note that $N(\x_1, r_1) \subseteq N(\x_2, r_2)$, we also have $h(\z)=-1, \forall \z\in N(\x_1, r_1)$. As a result,  $h(\Delta(\x_1, r_1;h))=-1$, meaning the sign pattern $\{+, -\}$ cannot be achieved by any $h\in \cH$ for $\{(\x_1, r_1), (\x_2, r_2)\}$.
    \item $r_1<0, r_2<0$. From Equation \ref{eq:fdeltax}, for any $h \in \cH$, $h(\Delta(\x_2, r_2;h))=1$ means $h(\z)=1, \forall \z\in N(\x_2, r_2)$. Similarly, from $N(\x_1, r_1) \subseteq N(\x_2, r_2)$ we conclude $h(\z)=1, \forall \z\in N(\x_1, r_1)$ and $h(\Delta(\x_1, r_1;h))=1$, meaning the sign pattern $\{-, +\}$ cannot be achieved by any $h\in \cH$ for $\{(\x_1, r_1), (\x_2, r_2)\}$.
\end{enumerate}

Therefore, $\{(x_i, r_i)\}_{i=1}^3$ cannot be shattered by $\cH$, which implies SVC$(\cH, R, c)\leq 2$. 

\end{proof}
 
\section{Proof of Theorem \ref{thm:svcinfinite}}\label{append:svcinfinite}
\begin{proof}
Let $\cX=\mathbb{R}^2$, and consider the  linear hypothesis class on $\cX$: $\cH=\{h=\text{sgn}(\w\cdot \x+b):(\w,b)\in \mathbb{R}^3, \x \in \cX\}$. We show that for any $n \in \mathbb{Z}^+$ and $R=\{+1\}$, there exist $n$ points $\{\x_i\}_{i=1}^n \in \cX^n$ and corresponding cost functions $\{c_i\}_{i=1}^n$, such that the $n$'th shattering coefficients $\sigma_n(\cH, R, \{c_i\}_{i=1}^n)=2^n$ (see Definition \ref{definition:ssc} for $\sigma_n$). Note that the cost function is instance-wise. For convenience, here we equivalently think of it as each data point $i$ has a different cost function $c_i$.

Let $\x_i=(0,0), \forall i\in [n]$ be the set of data points. The main challenge of the proof is a very careful construction of  the cost function for each data point. To do so, we first pick a set of $2^n$ different points  $S=\{s_j\}_{j=1}^{2^n} $  lying on the \emph{unit circle}, i.e., $S \subset \{(x,y): x^2+y^2=1\}$. The number $2^n$ is \emph{not} arbitrarily chosen --- indeed, we will map each point $s_j$ to one of the $2^n$ subsets of $[n]$ in a \emph{bijective} manner so that each $s_j$ corresponds to a unique subset of $[n]$. What are these $2^n$ different points will not matter to our construction neither it matters which point is mapped to which subset so long as it is a bijection.  Moreover, let $\bar{S}=\{(-x, -y): (x, y) \in S\}$ be the set that is origin-symmetric to $S$ such that $\bar{S} \cap S =\emptyset$. $\bar{S}$ is   chosen to ``symmetrize'' our construction to obtain a norm and needs not to have any interpretation.  
For any $\x_i$, we now define its cost function $c_i$ through the following steps :

\begin{enumerate}
    \item Let $S_i=\{s \subseteq [n]: i \in s\} \subset S$ contains all the $2^{n-1}$ subsets of $[n]$ that include the element $i$. 
    \item Let $\bar{S}_i=\{(-x, -y): (x, y) \in S_i\} \subseteq \bar{S}$ be the set that is origin-symmetric to $S_i$. 
    \item Let $G_i$ be the convex, origin-symmetric polygon with the vertex set being $S_i \cup \bar{S}_i$. 
    \item The cost function of $\x_i$ is defined as $c_i(\z;\x)=\|\x-\z\|_{G_i}$, where $\|\cdot\|_{G_i}=\inf\{\epsilon \in \mathbb{R}_{\geq0}: x\in \epsilon G_i\}$ is a norm derived from polygon  $G_i$ (note the  origin-symmetry of $S_i \cup \bar{S}_i$  and thus $G_i$). 
\end{enumerate}

Next we show that for any label pattern $\L \in \{+1,-1\}^n$, there exists some linear classifier $h \in \cH_2$ such that  $(h(\Delta_{c_1}(\x_1, +1;h ), \cdots, h(\Delta_{c_n}(\x_n, +1;h))=\L$.

With slight abuse of notation, let $s_{\L}= \{i\in[n]: \L_i=+1\}  \in S$ be the point in $S$ that corresponds to the set of the indexes of $\L$ with $\L_i = 1$. Let $h_{\L}$ be any \emph{linear classifier} whose decision boundary intersects the unit circle centered at $\x_i$ and \emph{strictly} separates $s_{\L}$ from all the other elements in $S\cup \bar{S}$. We will use $h_{\L}$ to denote both the linear classifier and its decision boundary (i.e., a line in $\mathbb{R}^2$) interchangeably. Due to the convexity of $G_i$, such $h_{\L}$ must exist. We further let $h_{\L}$ give prediction result $+1$ for the half plane that contains $s_{\L}$ and $-1$ for the other half plane. Figure \ref{fig:infsvcExample} illustrates the geometry of this example.

We now argue that $h_{\L}$ induces the given label pattern $\L$ for instances $\{(\x_i,  1, c_i)\}_{i=1}^n$. To see this, we examine $h_{\L}(\Delta_{c_i}(\x_i,  1;h))$ for each $i$:

\begin{enumerate}
    \item If $i \in  s_{\L} $, then $s_{\L} \in S_i$ and $\x_i$ can move to $s_{\L}$ with cost $c_i(s_{\L}; \x_i) < 1$. This is because $G_i$ is convex and  there exist a point $\x'_i$ on $h_{\L}$ such that $c_i(\x'_i; \x_i) <  c_i(s_{\L};x_i) = 1  = r_i$ (e.g., choose $\x'_i$ as the intersection point of the segment $[\x_i, s_{\L}]$ and $h_{\L}$). Therefore, $h_{\L}$ will classify $\x_i$ as positive. This case is shown in the left panel of Figure \ref{fig:infsvcExample}.
    \item If $i \notin  s_{\L}$, then $s_{\L} \notin S_i$ and $G_i$ does not intersect $h_{\L}$. In this case, $h_{\L}(\x)=-1$, and moving across $h_{\L}$ always induces a cost strictly larger than $1$. Therefore, the best response for $\x_i$ is to stay put and $h_{\L}$ will classify $\x_i$ as negative. This case is shown in the right panel of Figure \ref{fig:infsvcExample}. 
\end{enumerate}

Now we have shown that the $n$'th shattering coefficients $\sigma_n(\cH, \{+1,-1\}, \{c_i\}_{i=1}^n)=2^n$. Since $n$ can take any integer, we conclude the strategic VC-dimension in this case is $+\infty$.

\begin{figure}[ht]
    \centering
    \includegraphics[scale=0.33]{}
    \caption{Left: If $i \in s_{\L}$, $h_{\L}$ intersects with $G_i$, and $\x_i$ can manipulate its feature within $G_i$ to cross $h_{\L}$. Right: If $i \notin s_{\L}$, $h_{\L}$ and $G_i$ are disjoint; $\x_i$ cannot manipulate its feature within $G_i$ to cross $h_{\L}$. Given any label pattern $\L \in \{+1,-1\}^n$, $G_i$ is the convex, origin-symmetric polygon associated with $\x_i$'s cost function. The linear classifier $h_{\L}$ is chosen to separate $s_{\L}$ from all other elements in $\bar{S} \cup S$ and classifies $s_{\L}$ as $+1$. The left/right panel shows the two situations, depending on $i \in s_{\L}$ or $i \not \in s_{\L}$. }
    \label{fig:infsvcExample}
\end{figure}

\end{proof}

\section{Proof of Theorem \ref{thm:strat-vc}}
 
The following lemma from advanced linear algebra is widely known  and will be useful for our analysis.  
\begin{lemma}\label{lem:lp-distance}
For any   seminorm $l:\mathbb{R}^d \xrightarrow{} \mathbb{R}_{\geq 0}$, and the cost function $c(\z;\x)=l(\z-\x)$ induced by $l$, the minimum manipulation cost for $\x$ to move to the hyperplane $\w \cdot \x + b = 0$ is given by the following:
\begin{equation*}
\min_{\x'}\{c(\x';\x):\w \cdot \x'+b=0\}= \frac{|\w \cdot \x +b|}{l^*(\w)}
\end{equation*}
where $l^*(\w) = \sup_{\z \in B } \{\w \cdot \z \} \in \mathbb{R}_{\geq 0} \cup \{+\infty\}$, and $\B = \{ \z: l(\z) \leq 1 \}$ is the unit ball induced by $l$.
\end{lemma}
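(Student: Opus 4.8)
The plan is to turn this constrained cost‑minimization into the familiar duality between a seminorm and its support function. First I would substitute $\delta = \x' - \x$ and write $t = \w\cdot\x + b$, so that the constraint $\w\cdot\x'+b=0$ becomes the single linear equation $\w\cdot\delta = -t$, and the quantity to evaluate is $\inf\{\, l(\delta) : \w\cdot\delta = -t\,\}$. Assuming $\w\neq 0$ (otherwise the ``hyperplane'' is degenerate), the case $t=0$ is immediate: $\delta=0$ is feasible with cost $0$, and since $l^*(\w)>0$ for any $\w\neq 0$, the claimed value $|t|/l^*(\w)$ is $0$ as well. So I would assume $t\neq 0$ and establish matching lower and upper bounds.

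\emph{Lower bound.} I would show $|\w\cdot\delta|\le l(\delta)\, l^*(\w)$ for every $\delta\in\mathbb{R}^d$. If $l(\delta)>0$, homogeneity puts $\delta/l(\delta)$ in the ball $\B$, so $\w\cdot(\delta/l(\delta))\le l^*(\w)$; applying the same to $-\delta$ (using $l(-\delta)=l(\delta)$) gives the two‑sided bound. If $l(\delta)=0$ but $\w\cdot\delta\neq 0$, then the whole line $\{\lambda\delta:\lambda\in\mathbb{R}\}$ lies in $\B$, which forces $l^*(\w)=+\infty$, so both the inequality and the target formula ($|t|/(+\infty)=0$) hold trivially. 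Substituting $\w\cdot\delta=-t$ yields $l(\delta)\ge |t|/l^*(\w)$ for every feasible $\delta$.

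\emph{Upper bound and attainment.} Here I would use the dichotomy ``$l^*(\w)<\infty$ if and only if $\w$ vanishes on $V_l:=\{\z: l(\z)=0\}$'' — the forward direction because a vector $\v\in V_l$ with $\w\cdot\v\neq 0$ makes $\w$ unbounded on $\B$, the converse by passing to the quotient. If $l^*(\w)<\infty$, then $l$ descends to a genuine norm on the finite‑dimensional quotient $\mathbb{R}^d/V_l$ and $\w$ to a linear functional there; compactness of the quotient's unit ball produces $\z_0$ with $l(\z_0)=1$ and $\w\cdot\z_0=l^*(\w)$, and then $\delta^{*}=-\frac{t}{l^*(\w)}\,\z_0$ is feasible with $l(\delta^{*})=|t|/l^*(\w)$, meeting the lower bound. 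If $l^*(\w)=+\infty$, pick $\v\in V_l$ with $\w\cdot\v\neq 0$ and take $\delta^{*}=-\frac{t}{\w\cdot\v}\,\v$, which is feasible with $l(\delta^{*})=0=|t|/l^*(\w)$. Either way the infimum is attained at the stated value.

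I expect the main obstacle to be precisely the seminorm degeneracy: because $\B=\{l(\z)\le 1\}$ contains the whole subspace $V_l$, it is generally unbounded, so one cannot directly invoke compactness of $\B$ to attain $l^*(\w)$, and one must also be careful that the duality inequality still holds in the boundary case $l(\delta)=0$. Quotienting by $V_l$ — equivalently, isolating the dichotomy above — is the device that cleanly handles both points; the rest is routine homogeneity and symmetry bookkeeping.
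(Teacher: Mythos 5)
Your proof is correct. Note that the paper itself does not prove this lemma at all --- it is invoked as a ``widely known'' fact from linear algebra/convex duality --- so there is no in-paper argument to compare against; your write-up supplies exactly the missing justification. The two halves are sound: the lower bound is the standard duality inequality $|\w\cdot\delta|\le l(\delta)\,l^{*}(\w)$ obtained from homogeneity and symmetry of the seminorm, and the upper bound with attainment is handled correctly through the dichotomy on whether $\w$ annihilates the null space $V_l=\{\z:l(\z)=0\}$: if it does, $l$ descends to a norm on the finite-dimensional quotient, compactness gives a maximizer $\z_0$ on the unit sphere, and scaling $\z_0$ produces a feasible point of cost $|t|/l^{*}(\w)$; if it does not, a null direction with $\w\cdot\v\neq 0$ reaches the hyperplane at zero cost, matching the convention $|t|/(+\infty)=0$. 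This quotient device is also consonant with how the paper later copes with seminorm degeneracy in the proof of its SVC bound, where it restricts attention to classifiers with $\w\perp V_l$ for essentially the same reason. One cosmetic point: in the lower-bound case $l(\delta)=0$, $\w\cdot\delta\neq0$, the displayed inequality reads as $0\cdot\infty$; it would be cleaner to say directly that in this case $l^{*}(\w)=+\infty$, so the claimed lower bound is $0$ and holds vacuously --- which is what your argument effectively does, so correctness is unaffected.
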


The proof is divided into the following two parts. The first part is the  more involved one.

\vspace{2mm}
\noindent {\bf Proof of SVC$(\cH_d, \cR, c)\leq d+1-\dim(V_l)$:}

It suffices to show that for any $n>d+1-\dim(V_l)$ and $n$ data points $(\x_i,r_i) \in \mathbb{R}^d \times R, \forall i=1,\cdots,n$, there exists a label pattern $ \L \in \{+1,-1\}^n, $ such that for any $h \in \cH_d$ cannot induce $\L $, i.e.,  $$(h(\Delta_c(\x_1, r_1; h), \cdots, h(\Delta_c(\x_n, r_n; h))) \neq \L.$$  

The first step of our proof derives a succinct characterization about the classification outcome for a set of data points. For any seminorm $l$,  it is known the set $\B=\{x:l(x)\leq 1\}$ is nonempty, closed, convex, and origin-symmetric. Let $l^*(\w)=\sup_{\z \in B} \{  \w\cdot \z \}$. We have $l^*(\w)>0$ for all $\w \neq \bm{0}$ since $\bm{0}$ is an interior point of $\B$. According to Lemma \ref{lem:lp-distance}, for any $\x \in \mathbb{R}^d$ and any linear classifier $h=(\w,b) \in \cH_d$, the minimum manipulation cost for $\x$ to move to the decision boundary of $h$ is $|\w\cdot\x+b|/l^*(\w)$. Note that we may w.l.o.g.  restrict to $\w$'s such that $ l^*(\w)=1$ since the sign function $\text{sgn}(\w \cdot \x+b)$ does not change after re-scaling.  For any data point $(\x,r) \in \cX \times R$ and  linear classifier $h \in \cH_d$, we  define the \emph{signed} manipulation cost to the classification boundary as 
\begin{align*}
\delta (h, \x)  =  h(\x) \cdot  \frac{|\w\cdot\x+b|}{l^*(\w)}  = \w\cdot\x+b,
\end{align*}
using the condition $l^*(\w)=1$.  We claim that $h(\Delta_c(\x, r;h))= 2\mathbb{I}( \w\cdot\x+b \geq -r)-1$. This follows a case analysis:
\begin{enumerate}
    \item If $r \leq 0$, then $h(\Delta_c(\x, r, h))=1$  if and only if $h(\x)=1$ and $\x$ \emph{cannot} move across the decision boundary of $h$ within cost $|r| = -r$. This implies $h(\Delta_c(\x,r;h))=2\mathbb{I}( \w\cdot\x+b \geq -r)-1$.
    \item If $r >0$, then $h(\Delta_c(\x, r, h))=-1$ if and only if $h(\x)=-1$ and $\x$ cannot move across the decision boundary of $h$ within cost $r$. In this case, $h(\Delta_c(\x,r;h))=-(2\mathbb{I}(- (\w\cdot\x+b) > r)-1)=2\mathbb{I}( \w\cdot\x+b  \geq -r)-1$. Note that the first inequality holds strictly because we assume $h$ always gives $+1$ for those $\x$ on the decision boundary. 
\end{enumerate} 
For a set of samples $(\bm{X}, \bm{\r})$ where $\bm{X}=(\x_1,\cdots,\x_n), \bm{\r}=(r_1,\cdots,r_n)$, define the set of all possible vectors (over the choice of linear classifiers $(\w, b) \in \H_d$)  of signed manipulation costs as 
\begin{equation}\label{eq:StraVC-linear-map}
\cD(\cH_d, \bm{X})=\{( \w\cdot\x_1+b , \cdots, \w\cdot\x_n+b):h\in \cH_d\},
\end{equation} there is a $h \in \cH_d$ that achieves a label pattern $\L$ on $(\bm{X}, \r)$ if and only if there exist an element in $D(\cH_d, \bm{\x}) + \bm{\r}$ with the corresponding sign pattern $\L$.

 Recall that a linear classifier is described by $(\w,b) \in \mathbb{R}^{d+1}$. The second step of our proof rules out ``trivial'' linear classifiers under strategic behaviors, and consequently allows us to work with only linear classifiers in a   linear space of smaller dimension.  Let $\B=\{\x: l(\x)\leq 1\}$ and $V_l$ be the largest linear space contained in  $\B$. We argue that it suffice to consider only linear classifiers $(\w,b) $ with $\w \perp V_l$. This is because for any $\w$ that is not orthogonal to the subspace $V_l$, we can find $\bar{\z} \in V_l$ such that $c(\bar{\z}; \x) = 0$ and $\w \cdot \bar{\z} \to \infty$ since $V_l$ is a linear subspace. This means any data point can induce its preferred label $\text{sgn}(r)$ with $0$ cost, by moving to  $\bar{\z}$ if $\text{sgn}(r)= +$ and $-\bar{\z}$ otherwise.  Any such linear classifier will result in the same label pattern, simply specified by  $\text{sgn}(r)$.  As a consequence, we only need to focus on linear classifiers $(\w,b) $ with $\w \perp V_l$. Let $\tilde{H_d} = \{ (\w, b): \w \perp V_l \}$ denote all such linear classifiers. 
 
Next, we argue that when restricting to the  non-trivial class of linear classifiers $\tilde{ \cH_d}$,  the $\cD(\tilde{ \cH_d}, \bm{X})$ defined in Equation \eqref{eq:StraVC-linear-map} lies in a linear subspace with dimension at most $d+1 - \text{dim}(V_l)$.  Consider the linear mapping $\cG_{\bm{X}}: \tilde{ \cH_d} \to \mathbb{R}^n$ determined by the data features $\bm{X}$, defined as 
$$\cG_{\bm{X}}(\w, b) = (\w \cdot \x_1+b, \cdots, \w \cdot  \x_n + b), \quad \forall (\w, b) \in \tilde{ \cH_d}.$$ 
Since $\w \perp V_l$, $\w$ is from a linear subspace of $d  - \text{dim}(V_l)$. Linear mapping will not increase the dimension of the image space, therefore $\cD(\tilde{ \cH_d}, \bm{X})$ lies in a space with dimension at most $d+1 - \text{dim}(V_l)$. 

Finally, we prove that there must exist label patterns that cannot be induced by linear classifiers whenever the number of data points $n>d+1-\dim(V_l)$. Let  $\text{span} \big( \cD(\tilde{ \cH_d}, \bm{X}) \big)$ denote the smallest linear space that contains $\cD(\tilde{ \cH_d}, \bm{X}) $. Since $\text{span} \big( \cD(\tilde{ \cH_d}, \bm{X}) \big)$ has dimension at most $d+1 - \text{dim}(V_l)<n$ but $\text{span} \big( \cD(\tilde{ \cH_d}, \bm{X}) \big) \subset \RR^n$, there must exist a non-zero vector $\bar{\u} \in \RR^n$ such that: (1) $ \bar{\u} \not = \bm{0}$; (2) $\bar{\u} \perp \text{span} \big( \cD(\tilde{ \cH_d}, \bm{X}) \big)$ (i.e.,    $\bar{\u} \cdot \bm{v} = 0, \forall \bm{v} \in \text{span} \big( \cD(\tilde{ \cH_d}, \bm{X}) \big)$); and (3)  $\bar{\u} \cdot \bm{\r} \leq 0$ (if $\bar{\u} \cdot \bm{\r} \geq 0$, simply takes its negation). Note that this implies  $\bar{\u} \cdot \bm{v} \leq  0, \forall \bm{v} \in \text{span} \big( \cD(\tilde{ \cH_d}, \bm{X}) \big) + \bm{r}$.

We argue that the sign pattern of the vector $\bar{\u}$, denoted as  $\text{sgn}(\bar{\u})$, and  the sign pattern of all negatives ($\L=(-1,\cdots,-1)$) cannot be achieved simultaneously by $\tilde{\cH_d}$.  Suppose  $\text{sgn}(\bar{\u})$ can be achieved by $\tilde{\cH_d}$, then there must exist $\v^1  \in \text{span}(\cD(\tilde{\cH_d}, \bm X)) +\bm \r $ such that $\text{sgn}(\bar{\u})=\text{sgn}(\v^1 )$ and $\bar{\u} \cdot \v^1 \leq 0$. Since $\text{sgn}(\bar{\u})=\text{sgn}(\v^1)$ also implies $\bar{\u} \cdot \v^1 \geq 0$, we thus have $\bar{\u} \cdot \v^1=\sum_{j=1}\bar{u}_j v^1_j=0$. We claim that there must exist $j$ such that $\bar{u}_j > 0$. First of all, we cannot have $\bar{u}_j <0$ for any $j$ since that implies $v^1_j < 0$ (only strictly less $v^1_j$'s will be assigned $-1$ pattern due to our tie breaking rule) and consequently, $\bar{\u}   \cdot \v^1 < 0$, a contradiction.  Also note that $\bar{\u} \neq \bm 0$, so there exist $j\in [n]$ such that $\bar{u}_j > 0$.  

Utilizing the above property of $\bar{\u}$, we show that the sign pattern $\L=(-1, \cdots, -1)$ cannot be achieved by $\tilde{\cH_d}$. Suppose, for the sake of contradiction, that this is not true. Then there exists another $\v^2 =(v_1^2,\cdots,v_n^2) \in  \text{span} \big( \cD(\tilde{ \cH_d}, \bm{X}) \big) +\bm \r $ with all its elements being strictly negative. Now consider $\v=\v^1-\v^2  \in \text{span}(\cD(\tilde{\cH_d}, \bm X))$, we have $\bar{u} \cdot \v = \bar{u} \cdot \v^1 - \bar{u} \cdot \v^2 = 0- \bar{u} \cdot \v^2>0$.  Here the inequality holds because $\bar{u}_j\geq 0, v^2_j<0$ for all $j$ and there exists some $j$ such that $\bar{u}_j>0$. Therefore, we draw a contradiction to the fact that $\bar{\u} \cdot \v =0$ for any $\v \in \text{span}(\cD(\tilde{\cH_d}, \bm X))$. 

Now we proved that $\text{sgn}(\bar{\u})$ and $\L=(-1,\cdots,-1)$ cannot be achieved simultaneously by non-trivial classifiers $\tilde{\H}_d$, and the only achievable sign pattern for trivial classifiers is $\text{sgn}(\r)$. Note that $\r \in \text{span} \big( \cD(\tilde{ \cH_d}, \bm{X}) \big) + \bm{r}$,  $\text{sgn}(\r)$ is thus also achievable by $\tilde{\H}_d$. Therefore, the trivial classifiers has no contributions to the shattering coefficient, and we conclude at least one of $\text{sgn}(\bar{\u})$ and $\L=(-1,\cdots,-1)$ cannot be achieved by $\H_d$.

\vspace{2mm}
\noindent {\bf Proof of SVC$(\cH_d, \cR, c)\geq d+1-\dim(V_l)$:} 

The second step of the proof shows SVC$(\cH, \cR, c)\geq d+1-\dim(V_l)$ by giving an explicit construction of $(\bm X,\r)$ that can be shattered by $\cH_d$.  Let $\x_0=\bm{0}$, and $(\x_1,\cdots,\x_t)$ be a basis of the subspace orthogonal to $V_l$, $(\x_{t+1}, \cdots,\x_d)$ be a basis of the subspace $V_l$, where $t=d-\dim(V_l)$. 

We claim that the $t+1=d+1-\dim(V_l)$ data points in $\{0,1,\cdots,t\}$ can be shattered by $\cH_d$. In particular, for any given subset $S\subseteq \{0,1,\cdots,t\}$, consider the linear system

\begin{equation*}
    \begin{cases}
     \x_i \cdot \w_S + b_S = 1, \quad \text{if} \quad i \in S\\
     \x_i \cdot \w_S + b_S = -1, \quad \text{if} \quad i \leq t, \text{ and } i \notin S\\
     \x_i \cdot \w_S = 0, \quad t+1\leq i \leq d.
    \end{cases}
\end{equation*}

Because $(\x_1, \cdots,\x_d)$ has full rank, the solution $(\w_S, b_S)$ must exist. Therefore, the half-plane $h=\w_S\cdot \x+b_S$ separates $S$ and $\{\x_0, \cdots,\x_d\}/S$. Now consider the case when each $\x_i$ has a strategic preference $r_i \in \cR$. Since $\w_S$ is chosen to be orthogonal to $V_l$, $\w_S \cdot \x_i$ is bounded when $\x_i \in \{\z:c(\z; \x_i) \leq r_i\}$. Let $\delta_S=\max_{0\leq i\leq t} \{\sup \{\w_S \cdot (\z-\x_i):c(\z; \x_i) \leq r_i\}\}$, and $\delta=\max(1, 2\delta_S)$. Then the data set $\{ \delta\x_0, \cdots,\delta\x_t \}$ can be shattered by $\cH_d$ for any given $c,\cR$, because the classifier $(\delta\w_S, \delta b_S)$ separates the subset $S$ and the other points regardless their strategic responses.

\section{Proof of Theorem \ref{thm:determistic-easy}}
\begin{proof}[Proof of Theorem \ref{thm:determistic-easy}] 

For any data point $(\x, y, r)$, let the manipulation cost for the data point be $c(\z;\x) = l_{\x}(\z - \x)$ where $l_{\x}$ is any  seminorm. Since the instance is separable, there exists a hyperplane $h: \w \cdot \x + b=0$ that separates the given $n$ training points $(\x_1, y_1, r_1), \cdots, (\x_n, y_n, r_n)$ under strategic behaviors. The SERM problem is thus a feasibility problem, which we now formulate. 
Utilizing Lemma \ref{lem:lp-distance} about the signed distance from $\x_i$  to  hyperplane $h$ under cost function $c(\z; \x_i) = l_{\x_i}(\z - \x_i)$, we can  formulate the SERM problem under the separability assumption. Concretely, we would like to find a  hyperplane  $h: \w \cdot \x + b=0$ such that it satisfies the following for any $(\x_i, y_i, r_i)$:
\begin{enumerate}
\item If   $y_i = 1$ and $r_i \geq 0$, we must have either   $\w \cdot \x_i + b \geq 0$ or   $\w \cdot \x_i + b \leq  0$ and $\frac{-(\w \cdot \x + b)}{ l^*_{\x_i}(\w)} \leq  r_i$;
\item If   $y_i = 1$ and $r_i \leq  0$, we must have   $\frac{ \w \cdot \x + b}{ l^*_{\x_i}(\w)} \geq  - r_i$ (this implies   $\w \cdot \x_i + b \geq  0$);   
\item If   $y_i = -1$ and $r_i \leq  0$, we must have either   $\w \cdot \x_i + b \leq 0$ or   $\w \cdot \x_i + b > 0$ and $\frac{\w \cdot \x + b}{ l^*_{\x_i}(\w)} < -r_i$;
\item If   $y_i = -1$ and $r_i \geq  0$, we must have   $\frac{ -( \w \cdot \x + b)}{ l^*_{\x_i}(\w)} >  r_i$ (this implies $\w \cdot \x_i + b <  0$);  
\end{enumerate}
Note that we classify any point on the hyperplane as $+1$ as well, which is  why the strict inequality for Case 3 and 4. Case 1 can be summarized as   $\frac{\w \cdot \x + b}{ l^*_{\x_i}(\w)} \geq - r_i$. Similarly, Case 3 can be summarized as $\frac{\w \cdot \x + b}{ l^*_{\x_i}(\w)} < -r_i$. To impose the strict inequality for Case 3 and 4, we may introduce an $\epsilon$ slack variable. These observations lead to the following formulation of the SERM problem.     

\begin{lp}\label{lp:erm-deterministic} 
\find{  \w, b,   \epsilon >0   } 
\stt
\qcon{  \frac{\w \cdot \x_i + b }{ l^*_{\x_i}(\w)} \geq -r_i}{ \text{ points }(\x_i, y_i,  r_i) \text{ with } y_i=1}
\qcon{  \frac{\w \cdot \x_i + b }{ l^*_{\x_i}(\w)} \leq -r_i-\epsilon}{ \text{ points }(\x_i, y_i,  r_i) \text{ with } y_i=-1 }
\end{lp}

We now consider the two settings as described in the theorem statement. 
We first consider {\bf  Situation 1}, i.e., the essentially adversarial case with $\min^-  \geq \max^+$  and an instance-invariant cost function induced by the same seminorm $l$, i.e., $c(\z; \x) = l(\x - \z)$ for any $\x$.   In this case, System \eqref{lp:erm-deterministic} is equivalent to the following 

\begin{lp}\label{lp:erm-essentialAdv} 
\find{  \w, b,   \epsilon >0   }
\stt
\qcon{  \w \cdot \x_i + b   \geq  -r_i     }{ \text{ points }(\x_i, y_i,  r_i) \text{ with } y_i =1 }
\qcon{    \w \cdot \x_i + b  \leq -(r_i + \epsilon)   }{ \text{ points }(\x_i, y_i,  r_i) \text{ with } y_i = -1 }
\con{ l^* (\w) = 1}
\end{lp}

This system is unfortunately not a convex feasibility problem. To solve System \eqref{lp:erm-essentialAdv}, we consider the following optimization program (OP), which is a relaxation of System \eqref{lp:erm-essentialAdv} by  relaxing  the non-convex constraint $ l^* (\w) = 1$  to  the convex constraint $ l^* (\w) \leq 1$. 
\begin{lp}\label{lp:erm}
\maxi{    \epsilon   }
\stt
\qcon{\w \cdot \x_i + b \geq -r_i  }{ \text{ points }(\x_i, r_i) \text{ with label 1} }
\qcon{ \w \cdot \x_i + b \leq -r_i - \epsilon }{ \text{ points }(\x_i, r_i) \text{ with label -1}}
\con{ l^* (\w)  \leq 1}
\end{lp}

Note that OP \eqref{lp:erm} is a convex program because the objective and constraints are either linear or convex. Therefore, OP \eqref{lp:erm}  can be efficiently solved in polynomial time.\footnote{Note that convex programs can only be solved to be within precision $\epsilon$ in $\text{poly}(1/\epsilon)$ time sine it may have irrational solutions. In this case, we simply say it can be ``solved'' efficiently.  } Note that this relaxation is not tight in general as we will show later that solving System \eqref{lp:erm-essentialAdv}  is NP-hard in general.  

Our main insight is that under the assumption of  $\min^{-} \geq \max^+$, the above relaxation is tight --- i.e.,  there always exists an optimal solution to the above problem with $ l^* (\w) = 1$. This solution is then a feasible solution to System \eqref{lp:erm-essentialAdv} as well,  thus completing our proof. Concretely, given any optimal solution $(\w^*, b^*, \epsilon^*)$ to OP \eqref{lp:erm}, we construct another solution $(\bar{\w}, \bar{b}, \bar{\epsilon})$ as follows: $$ \bar{\w} = \frac{ \w^*}{\alpha} , \quad  \bar{b} = \frac{ b^*}{\alpha} + (\frac{1}{\alpha} -1)\frac{\min^- + \max^+}{2} , \quad \bar{\epsilon} = \frac{ \epsilon^*}{\alpha}, \quad \text{ where } \alpha = l^* (\w^*)  \leq 1.  $$ 
We claim that the constructed solution above  remains feasible to OP \eqref{lp:erm}. Note that for data point with label 1, we have: (1) $ \frac{\min^- + \max^+}{2} \geq r_i$ by assumption $r_i \leq \max^+ \leq \min^-$; (2) $\x_i \cdot \w^* + b^* \geq -r_i  $ by the feasibility of $(\w^*, b^*, \epsilon^*)$. Therefore
\begin{eqnarray*}
&& \x_i \cdot \frac{\w^*}{\alpha} + \frac{b^*}{\alpha}  \geq -\frac{r_i}{\alpha}  \\ 
& \Rightarrow &  \x_i \cdot \frac{\w^*}{\alpha} + \frac{b^*}{\alpha}  + (\frac{1}{\alpha} -1)\frac{\min^- + \max^+}{2}  \geq -\frac{r_i}{\alpha}  + (\frac{1}{\alpha} -1)r_i \\ 
& \Leftrightarrow &  \x_i \cdot \bar{\w} + \bar{b}   \geq -r_i  
 \end{eqnarray*} 
 This proves that the constructed solution is feasible for data points with label 1. Similar argument using the inequality $  \frac{\min^- + \max^+}{2} \leq r_i $ for any negative label data point shows that it is also feasible for negative data points. It is easy to see that the solution quality is as good as the optimal solution $\epsilon^*$ since $\alpha \leq 1$. This proves the optimality of the constructed solution. 

Finally, we consider the {\bf Situation 2 } where the instance is   adversarial, i.e, $\min^- \geq 0 \geq \max^+$. In this case,  $r_i$ in the first constraint of System \eqref{lp:erm-deterministic} is always non-positive whereas $r_i$ in the second constraint is always non-negative. After basic algebraic manipulations, the SERM problem becomes the following optimization problem. 
\begin{lp}\label{lp:erm-generalAdv} 
\find{  \w, b,   \epsilon >0   }
\stt
\qcon{  \w \cdot \x_i + b   \geq (-r_i) \cdot l^*_{\x_i}(\w)}{ \text{ points }(\x_i, y_i,  r_i) \text{ with } r_i \leq  0}
\qcon{    -( \w \cdot \x_i + b )  \geq (r_i + \epsilon) \cdot l^*_{\x_i}(\w)  }{ \text{ points }(\x_i, y_i,  r_i) \text{ with } r_i \geq 0 }
\end{lp}

This is again not a convex feasibility problem due to the non-convex term $(r_i + \epsilon) \cdot l^*_{\x_i}(\w)$, however for any fixed $\epsilon >0$  both constraints are convex. Moreover, if the system is feasible for some $\epsilon_0>0$ and it is feasible for any $0 < \epsilon \leq \epsilon_0$. Therefore, we can determine the feasibility of the (convex) system for any fixed $\epsilon$ and then binary search for the feasible $\epsilon$. Therefore, the feasibility problem in System \eqref{lp:erm-deterministic} can be solved in polynomial time.
\end{proof} 

\section{Proof of Theorem \ref{thm:determistic-hard}}
\begin{proof}

We start with  {\bf Situation 1}, i.e., the preferences are arbitrary but the cost function is $c(\z;\x) = \|\x - \z||_2^2$. We will show later that the second situation can be reduced from the first.  In the first situation, the feasibility problem is System \eqref{lp:erm-essentialAdv} with $l$ as the $l_2$ norm. Our reduction starts by reducing this system to the following optimization problem (OP)

\begin{lp}\label{lp:erm-hardness-1}
\maxi{||\w||_2^2}
\stt
\qcon{\x_i \cdot \w + b \geq -r_i }{ \text{ points }(\x_i, r_i) \text{ with label 1} }
\qcon{ \x_i \cdot \w + b \leq -r_i - \epsilon  }{ \text{ points }(\x_i, r_i) \text{ with label -1}}
\con{||\w||^2_2 \leq 1}
\end{lp}

Formally, we claim that for any fixed $\epsilon$, system \eqref{lp:erm-essentialAdv} is feasible if and only if OP \eqref{lp:erm-hardness-1} has optimal objective value $1$. The ``if" direction is simple. That is, if OP \eqref{lp:erm-hardness-1} has optimal objective value $1$, then the optimal solution $(\w^*, b^*)$ is automatically a feasible solution to System \eqref{lp:erm-essentialAdv} because $||\w^*||_2 = 1$. For the ``only if" direction, let $(\bar{\w}, \bar{b})$ be any feasible solution to System \eqref{lp:erm-essentialAdv}, then it is easy to verify $\w^* = \frac{\bar{\w}}{||\w||_2}$ and $b^* = \frac{\bar{b}}{||\w||_2} $ must also be feasible to System \eqref{lp:erm-essentialAdv}. Moreover, it is an optimal solution to OP \eqref{lp:erm-hardness-1}  with objective value $1$, as desired.

We now prove that determining whether the optimal objective value of OP \eqref{lp:erm-hardness-1} equals $1$ or not is NP-complete. We reduce from the following well-known NP-complete problem called the \emph{partition problem}:
\begin{eqnarray*}
& \text{Given $d$ positive integers $c_1, \cdots, c_d$, decide whether there exists a subset $S \subset [d]$ such that} \\
& \sum_{i \in S} c_i = \sum_{i \not \in S} c_i
\end{eqnarray*}

We now reduce the above partition problem to solving OP \eqref{lp:erm-hardness-1}. Given any instance of partition problem, construct the following SERM instance. 

\vspace{2mm}
\noindent {\bf The Constructed Hard SERM Instance for Situation 1}: We will have $n = 2d + 3$ data points with feature vectors from $\RR^d$. For convenience, we will use $\e_i$ to denote the basis vector in $\RR^d$ whose entries are all $0$ except that the $i$'th  is $1$. For each $i \in [d]$, there is a data point  $(\x, y, r)= (2\sqrt{d} \cdot \e_i, 1, 4)$  as well as a data point $(\sqrt{d} \cdot \e_i, -1, 1-\epsilon)$. The remaining three data points are  $(\c,1, 2)$, data point  $(2\c,-1, 2-\epsilon)$, and data  point  $(3\c,1, 2)$.

We claim that OP \eqref{lp:erm-hardness-1} instantiated with the above constructed instance has an optimal objective value $1$  if and only if the answer to the given partition problem is \emph{Yes}. We first prove the ``if" direction. If the partition problem is a Yes instance, then there exists an $S$ such that $\sum_{i \in S} c_i - \sum_{i \not \in S} c_i = 0$. We argue that the following construction is an optimal solution to OP \eqref{lp:erm-hardness-1} with optimal objective value $1$:
$$ b^* = -2 , \, \,  w_i = \frac{1}{\sqrt{d}} \, \,  \forall i \in S, \, \,  w_i = -\frac{1}{\sqrt{d}} \, \, \forall i \not \in S. $$
Clearly, $||\w^*||_2^2 = 1$. We only need to prove feasibility of $(\w^*, b^*)$. For any label $1$ point  $(\x,r) = (2\sqrt{d} \cdot \e_i, 4)$, we have $ \x \cdot \w^* + b^* =  2\sqrt{d} \e_i \cdot \w^* - 2 = -4  \geq -r $, as desired. Similarly, for any label $-1$  point  $(\x,r) = (\sqrt{d} \cdot \e_i, 1-\epsilon)$, we have $ \x \cdot \w^* + b^* =  \sqrt{d} \e_i \cdot \w^* - 2 = -1 \leq -r-\epsilon$. The feasibility of point $(\c,2)$ with label 1 is argued as follows: $ \x \cdot \w^* + b^* =   \c \cdot \w^* - 2 = -r$. Feasibility of $(2 \c,2-\epsilon)$ and $(3\c, 2)$ are similarly verified. 

We now prove the ``only if'' direction.  In particular, we prove that that if OP \eqref{lp:erm-hardness-1} has some optimal solution $(\w^*, b)$ with $||\w^*||_2^2 =1$, then the partition instance must be Yes. 

Let us first examine the feasibility of OP \eqref{lp:erm-hardness-1}.  
\begin{enumerate}
    \item By the constraints with respect to positive-label data points $(2\sqrt{d} \cdot \e_i, 4)$, we have $2\sqrt{d}  \e_i \cdot \w + b\geq -4$ or equivalently  $w_i  \sqrt{d}  \geq  - \frac{b}{2}-2$.
    \item By the constraints with respect to negative-label data points $(\sqrt{d} \cdot \e_i, 1-\epsilon)$, we have $\sqrt{d}  \e_i \cdot \w + b\leq -1$ or equivalently  $ w_i  \sqrt{d}  \leq -b -1$. 
    \item By the constraints with respect to data point $( \c, 2)$ with label 1, we have $ \c \cdot \w + b\geq -2 $, or equivalently $-2  - b \leq  \c \cdot \w$. 
    \item By the constraints with respect to data point $(2 \c, 2-\epsilon)$ with label -1, we have $2 \c \cdot \w + b\leq -2$, or equivalently  $-2 - b \geq 2 \c \cdot \w$. 
      \item By the constraints with respect to data point $(3 \c, 2)$ with label 1, we have $3 \c \cdot \w + b\geq -2$, or equivalently  $-2 - b \leq 3 \c \cdot \w$. 
\end{enumerate} 
 Point 3--5 implies $ 2 \c \cdot \w \leq  -2  - b \leq  \min \{  \c \cdot \w, 3 \c \cdot \w \}$. This must imply $\c \cdot \w = 0$ as any non-zero  $\c \cdot \w $ cannot satisfy $ 2 \c \cdot \w  \leq  \min \{  \c \cdot \w, 3 \c \cdot \w \}$. As a consequence, the only feasible $b$ value is $b = -2$. Plugging $b=-2$ into Point 1 and 2, we have
 $$ -\frac{1}{\sqrt{d}} \leq w_i \leq \frac{1}{\sqrt{d}}.$$
 Since the optimal objective value is $1 = \sum_{i=1}^d (w_i^*)^2$, it is easy to see that this optimal objective is achieved only when each $w_i^*$ equals either $-\frac{1}{\sqrt{d}} $ or $\frac{1}{\sqrt{d}}$. Now define $S= \{i: w_i^* = \frac{1}{\sqrt{d}}  \} $ to be the set of $i$ such that $w_i^*$ is positive. It is easy to verify that $S$ will be a solution to the partition problem, implying that it is a \emph{Yes} instance. This proves the NP-hardness for {\bf Situation 1} stated in the theorem. 
 
Finally, we consider {\bf Situation 2}  which can be reduced from the first situation. In particular, the constructed hard instance above has reward preferences all being positive (in fact, drawn from only three possible values $\{ 1, 2, 4 \}$), but do not satisfy the essentially adversarial condition. However, if we are allowed to use instant-wise cost functions, we can simply scale down the reward preference for point with label $1$ but propositionally scale down its cost function so that the right-hand-side of the first constraint in System \eqref{lp:erm-essentialAdv} remains the same. Concretely, we now modify our constructed instance above to be the follows. 
 
 \vspace{2mm}
 \noindent {\bf The Constructed Hard SERM Instance for Situation 2}: We still have $n = 2d + 3$ data points with feature vectors from $\RR^d$. For each $i \in [d]$, there is a data point   $(\x, y, r)=(2\sqrt{d} \cdot \e_i, 1, 0.5)$ with  cost function $c(\z;\x) = \frac{1}{8}|\z - \x||_2^2$ as well as a data point   $(\sqrt{d} \cdot \e_i, -1, 1-\epsilon)$ with cost function $c(\z;\x) =  |\z - \x||_2^2$. The remaining three data points are: (1) data point $(\c, 1, 0.5)$ with  cost function $c(\z;\x) = \frac{1}{4}|\z - \x||_2^2$; (2)  data point  $(2\c, -1, 2-\epsilon)$ with cost function $c(\z;\x) =  |\z - \x||_2^2$; (3)  data  point  $(3\c,1, 0.5)$ with cost function $c(\z;\x) = \frac{1}{4}|\z - \x||_2^2$. 
 
It is easy to verify that the above instance satisfy situation 1 in the theorem statement and is equivalent to the instance we constructed for the second situation and thus is also NP-hard.  
\end{proof}

\end{document}